%% file: neurips_2026.tex
\documentclass{article}

\PassOptionsToPackage{numbers, compress}{natbib}

\usepackage[preprint]{neurips_2026}
\usepackage{amsmath}
\usepackage{amsthm}
\usepackage[framemethod=default]{mdframed}
\usepackage{todonotes}
\usepackage[ruled,vlined]{algorithm2e}
\usepackage{wrapfig}
\usepackage{thmtools}
\usepackage{booktabs}
\usepackage[table]{xcolor}
\usepackage{array}
\usepackage{graphicx}

\definecolor{rowgray}{RGB}{245,245,245}

\newmdenv[
  linewidth=0.4pt,
  linecolor=black!40,
  backgroundcolor=black!3,
  skipabove=6pt,
  skipbelow=6pt,
  innerleftmargin=8pt,
  innerrightmargin=8pt,
  innertopmargin=5pt,
  innerbottommargin=5pt
]{sidebar}

\newtheorem{theorem}{Theorem}
\newtheorem*{theorem*}{Theorem}

\newtheorem*{proposition*}{Proposition}

\newtheorem*{corollary*}{Corollary}

\newtheorem{definition}[theorem]{Definition}
\newtheorem{remark}[theorem]{Remark}

\newcommand{\R}{\mathbb{R}}

\newcommand{\lam}{\lambda}
\newcommand{\uref}{u^{\mathrm{ref}}}
\newcommand{\ind}{\mathbf{1}}

\usepackage[utf8]{inputenc} 
\usepackage[T1]{fontenc}    
\usepackage{hyperref}       
\usepackage{cleveref}
\crefname{section}{Sec.}{Secs.}
\Crefname{section}{Sec.}{Secs.}
\crefname{subsection}{Sec.}{Secs.}
\Crefname{subsection}{Sec.}{Secs.}
\crefname{figure}{Fig.}{Figs.}
\Crefname{figure}{Fig.}{Figs.}
\crefname{table}{Tab.}{Tabs.}
\Crefname{table}{Tab.}{Tabs.}
\crefname{algorithm}{Alg.}{Algs.}
\Crefname{algorithm}{Alg.}{Algs.}
\crefname{theorem}{Thm.}{Thms.}
\Crefname{theorem}{Thm.}{Thms.}
\crefname{definition}{Def.}{Defs.}
\Crefname{definition}{Def.}{Defs.}
\crefname{lemma}{Lem.}{Lems.}
\Crefname{lemma}{Lem.}{Lems.}
\crefname{equation}{Eq.}{Eqs.}
\Crefname{equation}{Eq.}{Eqs.}
\crefname{appendix}{App.}{Apps.}
\Crefname{appendix}{App.}{Apps.}
\usepackage{url}            
\usepackage{booktabs}       
\usepackage{multirow}       
\usepackage{makecell}       
\usepackage{graphicx}       
\usepackage{amsfonts}       
\usepackage{nicefrac}       
\usepackage{microtype}      
\usepackage{xcolor}         

\title{OperatorSHAP: Fast and Accurate Shapley Value Estimation for Neural Operators}

\author{%
  Joshua Stiller \\
  LMU Munich, MCML\\
  \texttt{joshua.stiller@lmu.de} \\
\And
  Santo M. A. R. Thies \\
  LMU Munich, MCML\\
  \texttt{santo.thies@ifi.lmu.de} \\
\AND
  Felix Czaja \\
  LMU Munich\\
  \texttt{felix.czaja@ifi.lmu.de} \\
\And 
  Eyke Hüllermeier \\
  LMU Munich, MCML, DFKI\\
  \texttt{eyke@lmu.de} \\
}
\begin{document}

\maketitle
\begin{abstract}
Understanding model predictions is essential for physical applications, where outputs often inform safety-critical decisions, such as structural load assessment, weather warnings, and clinical diagnosis. Shapley values satisfy many desirable properties as an attribution method, but their computational cost during inference hinders their practical use. Current amortized explainers, such as FastSHAP, are limited to homogeneous inputs, which is problematic for physical applications where data often comes from irregular grids and geometries. We introduce OperatorSHAP, a grid-agnostic attribution method and training procedure that allows us to train FastSHAP-like explainers for neural operators. We establish a theoretical framework for attributions in function space, connecting to Aumann–Shapley values. We further show that OperatorSHAP's explanations are consistent with state-of-the-art discrete Shapley values across resolutions and transfer across grid sizes without retraining.
\end{abstract}
\section{Introduction}
\label{sec:intro}
Neural operators~\cite{DBLP:conf/iclr/LiKALBSA21,kovachkiNeuralOperatorLearning2023} learn maps between function spaces and are grid-agnostic by design: the same trained model accepts a function sampled on 128 points or on 1,024 points. This flexibility is essential for physical applications, where datasets often contain samples with heterogeneous grids, coming from irregular geometries or sensor placements (see \cref{fig:table_and_mgn_cylinder}).

Fast and reliable explanations of their predictions are important for widespread industrial adoption, where users need to assess the credibility of high-stakes outputs such as structural load estimations or weather warnings. Amortized Shapley explainers, such as FastSHAP~\cite{jethaniFastSHAPRealTimeShapley2022}, are among the most established and efficient explanations tools in the field of explainable AI. However, they are restricted to homogeneous inputs, which limits their applicability. In addition, it is not obvious how to define amortized Shapley values on heterogeneous grids, as attributions are relative to the total player count but also to the distances between neighboring players.

Aumann--Shapley values~\cite{aumannValuesNonAtomicGames2015} provide a theoretical extension of Shapley values to games with infinite players. In this setting, values are defined similarly to densities over the player space, and the attribution of a coalition can be calculated by integrating this density over the spatial region corresponding to the coalition. Showing that a game has  a well-defined Aumann--Shapley value is a non-trivial task, especially in the function space setting of neural operators. In this paper, we investigate these theoretical foundations and show how to design grid-agnostic explainers for neural operators. Our \textbf{contributions} are as follows:
\begin{enumerate}
    \item \textbf{Theoretical foundations (\cref{sec:theory}).}
    We show how operator games fit into the Aumann--Shapley framework for games with infinite players, and provide a rigorous mathematical framework for function-valued games. We connect this to the function-valued integrated gradient formula and show under which conditions the game and its values are well-defined for current neural operator architectures.

    \item \textbf{Amortized grid-agnostic explainer
    (\cref{sec:method}).}
    We provide a FastSHAP-like training procedure for Aumann--Shapley values, allowing us to train  amortized explainers even when the samples are provided on irregular grids (as is often the case in scientific applications). 

    \item \textbf{Empirical validation (\cref{sec:experiments}).}
    We validate our approach on a variety of partial differential equations (PDEs) with varying grids per sample and compare the results to classical Shapley approximators with regard to correlation, error, and faithfulness. We show that the resulting explainer produces attributions that are consistent with state-of-the-art discrete Shapley values at each resolution, and that these attributions transfer across grid sizes without retraining.
\end{enumerate}

\section{Related work}
\label{sec:related}
\begin{figure}[t]
  \centering
    \begin{minipage}[c]{0.44\textwidth}
    \centering
    \begin{tikzpicture}
      \node[inner sep=0pt] (table) {
        \footnotesize
        \renewcommand{\arraystretch}{1.25}
        \setlength{\tabcolsep}{4pt}
        \begin{tabular}{@{}lcc@{}}
          \toprule
          \rowcolor{rowgray}
          \textbf{Method} & \makecell{\textbf{Inference}\\\textbf{speed}} & \makecell{\textbf{Grid}\\\textbf{flexibility}} \\
          \midrule
          \makecell[l]{Classical SHAP\\(e.g., KernelSHAP)} & Slow & Any grid \\
          FastSHAP & Fast & \makecell{Homogeneous\\grid only} \\
          OperatorSHAP & Fast & Any grid \\
          \bottomrule
        \end{tabular}
      };
    \end{tikzpicture}
  \end{minipage}\hfill
  \begin{minipage}[c]{0.54\textwidth}
    \centering
    \includegraphics[width=\linewidth]{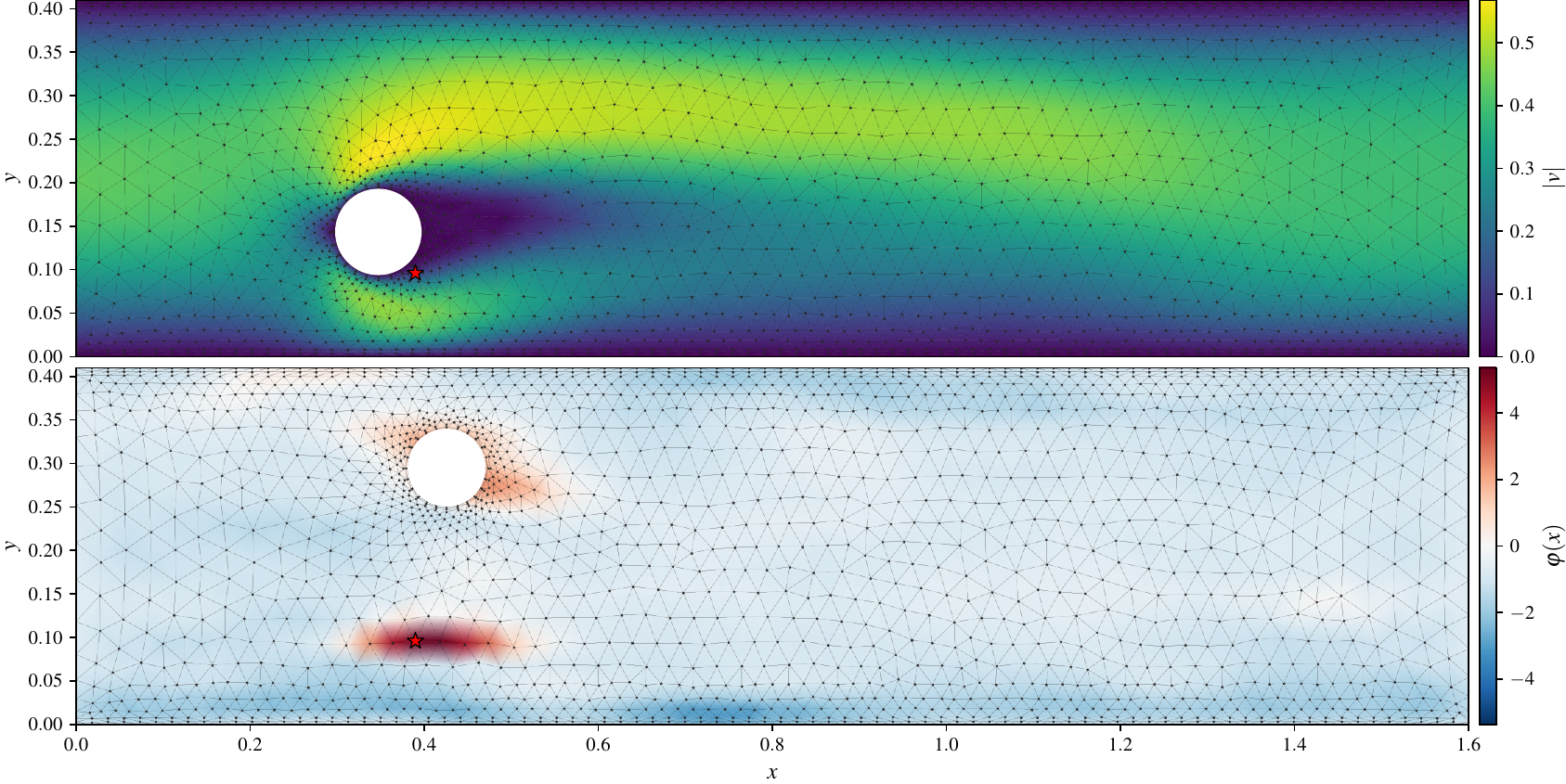}
  \end{minipage}
  \caption{\emph{Left:} Table with an overview of current methods. \emph{Right:} Two different samples from the MeshGraphNets cylinder-flow dataset~\cite{pfaffLearningMeshBased2021}. Top shows a heatmap of airflow velocity strengths, with a grid of sensor points, and the cylinder object obstructing the airflow. The bottom shows a different sample's attribution map from OperatorSHAP. The explained spatial point is marked with a red star. The varying cylinder placement in this dataset yields heterogeneous meshes, but OperatorSHAP is able to give correct explanations. Here, the horizontal area around the attribution spot is positively relevant, as the air flows from left to right.}
  \label{fig:table_and_mgn_cylinder}
\end{figure}
\paragraph{Classical Shapley estimation.}
\citet{lundbergUnifiedApproach2017} introduced Shapley-value explanations to machine learning, while \citet{castroPolynomialCalculationShapleyValue2009} proposed one of the earliest permutation-based approximation methods. Subsequent work has developed model-agnostic estimators based on Monte Carlo sampling \citep{castroPolynomialCalculationShapleyValue2009,wangDataBanzhaf2023,kolpaczkiSVARM2024}, weighted least-squares objectives \citep{fumagalliPolySHAP2026,fumagalliODDSHAP2026,lundbergFromLocalToGlobalUnderstanding2020,muscoLeverageSHAP2025}, and surrogate modeling of the value function \citep{witterRegressionadjustedMonteCarlo2026,butlerProxySPEX2025}. Complementary model-specific approaches exploit structure in trees \citep{lundbergFromLocalToGlobalUnderstanding2020,DBLP:journals/corr/abs-2209-08192,zernInterventionalSHAP2023}, kernel methods \citep{mohammadiExactShapleyValueProductKernel2025}, and $k$-nearest-neighbor algorithms \citep{wangWKNN2024}. Related work extends these ideas from single-feature attributions to interaction or group attributions \citep{fujimotoAxiomaticCharacterization2006}, including Monte Carlo, regression-based, tree-specific, and graph-neural-network-specific estimators \citep{fumagalliSHAPIQ2023,kolpaczkiSVARMIQ2024,fumagalliKernelSHAPIQ2024,muschalikTreeSHAPIQ2024,muschalikGraphSHAPIQ2025}.
\textsc{OperatorSHAP} belongs to this model-agnostic approximation literature, but adapts Shapley-value estimation to explanation problems in dynamical systems.

\paragraph{Amortized SHAP.} FastSHAP~\cite{jethaniFastSHAPRealTimeShapley2022} introduced the idea of training a single explainer to produce Shapley values in one pass for finite-dimensional inputs; we extend the same loss to function-valued inputs by using a neural operator explainer. This parallels recent work ~\citep{InstaSHAP,ViTShapley}, which extends FastSHAP's amortized framework to Generalized Additive Models and Vision Transformers, but all of these previous approaches operate on finite-dimensional feature spaces and do not generalize across mesh resolutions or grid structures.

\paragraph{Integrated gradients and Aumann--Shapley.} Integrated gradients (IG)~\cite{sundararajanAxiomaticAttributionDeep2017} and its variants are differential explanations for finite-dimensional models; their connection to Aumann--Shapley values is well understood in the game-theory literature~\cite{aumannValuesNonAtomicGames2015,neymanChapter56Values2002}. ~\citet{liuNewBaselineAssumption2023} explored the connection of IG to Aumann--Shapley values in the machine learning context and \citet{delicadoFunctionalRelevanceBased2024} proposed asymptotic Aumann--Shapley values for functional data. None of these works investigates the connection in the context of neural operators, grid heterogeneity, or amortized explainers.

\paragraph{Neural Operators.} Neural operators~\cite{kovachkiNeuralOperatorLearning2023} are a class of models that learn solution operators of PDEs. Fourier neural operators (FNOs)~\cite{DBLP:conf/iclr/LiKALBSA21} use Fourier transforms to learn the solution in frequency space, GINO~\cite{liGeometryInformedNeural2023} is a graph-based architecture that can handle irregular meshes. Further variant include multi-wavelet neural operators~\cite{guptaMultiwaveletOperatorLearning2021}, which use wavelet transforms instead of Fourier transforms or the Local FNO (LFNO)~\cite{liuNeuralOperatorsLocalized2024}, which uses Fourier transforms on local patches.

\section{Preliminaries}
\label{sec:prelims}
\subsection{Attributions for finite players}
\paragraph{Cooperative games.}
A \emph{cooperative game} on a finite player set $N=\{1,\dots,n\}$ is a set-function $v:2^N\to\R$ with $v(\emptyset)=0$, where $v(S)$ is the worth that coalition $S\subseteq \{1,\dots,n\}$ produces when its members act jointly~\cite{shapleyValuePersonGames1988}. In the context of machine learning, players are input features to some model, and $v(S)$ measures how much of a model's output is recovered when the features in $S$ are revealed and the rest of the features are set to some baseline (e.g., zero if applicable; various other baselines are used). How to allocate the total worth $v(N)$ back to individual players has a unique answer under mild axioms.

\paragraph{Shapley values.}
That answer, due to Shapley~\cite{shapleyValuePersonGames1988}, averages the marginal
contribution $v(S\cup\{i\})-v(S)$ of player $i$ over all coalitions, weighted
by the number of orderings in which each coalition appears:
\begin{equation}
\label{eq:shapley_def}
\psi_i(v)
\;=\;
\sum_{S\subseteq N\setminus\{i\}}
\frac{|S|!\,(n-|S|-1)!}{n!}\bigl(v(S\cup\{i\})-v(S)\bigr).
\end{equation}
This $\psi_i$ is the unique allocation satisfying \emph{efficiency} ($\sum_i\psi_i(v)=v(N)$), \emph{symmetry} ($v(S\cup\{i\})=v(S\cup\{j\})\;\forall S\subseteq N\setminus\{i,j\}\Rightarrow\psi_i(v)=\psi_j(v)$), the \emph{null-player} property ($v(S\cup\{i\})=v(S)\;\forall S\subseteq N \setminus \{i\}\Rightarrow\psi_i(v)=0$), and \emph{additivity} ($\psi(v+w)=\psi(v)+\psi(w)$).
Due to the computational complexity, Shapley values are usually approximated by methods such as KernelSHAP~\cite{lundbergUnifiedApproach2017} or, more recently, regression-adjusted Monte Carlo estimators (RegressionMSR)~\cite{witterRegressionadjustedMonteCarlo2026}.

\paragraph{FastSHAP.}
Whereas Shapley values are calculated for each sample separately, FastSHAP~\cite{jethaniFastSHAPRealTimeShapley2022} amortizes the computational cost of SHAP across inputs by training a single \emph{explainer} $\Phi_\theta:\R^n\to\R^n$ that returns the entire Shapley vector for a given input $x$ in one forward pass, using the weighted MSE objective
\[
\mathcal L(\theta)
\;=\;
\mathbb E_{x,\,S\sim p_w}\!\bigl[\bigl(v_x(S)-\langle\Phi_\theta(x),\ind_S\rangle\bigr)^2\bigr],
\]
with efficiency $\langle\Phi_\theta(x),\ind_N\rangle=v_x(N)$ imposed by hard
normalization (Algorithm~1 of~\cite{jethaniFastSHAPRealTimeShapley2022}). Here, $v_x$ is the value of a game for input $x$, and $p_w$ is a suitable distribution over coalitions.
For finite-dimensional inputs, $\Phi_\theta$ is typically an MLP, with more recent variants replacing it by a generalized additive model~\cite{InstaSHAP} or a vision transformer~\cite{ViTShapley}. As our datasets often use irregular grids, none of these architectures can be used as explainers off the shelf.

\subsection{The operator setting}

\paragraph{Spatial PDEs and operator surrogates.}
We seek to explain surrogate models for PDEs (see \cref{app:pde_primer} for a short introduction).
We consider evolution problems of the form
$\partial_t u = \mathcal L_x u$ on a bounded spatial domain
$\Omega\subset\R^d$ with prescribed boundary and initial conditions, where
$\mathcal L_x$ is a (possibly nonlinear) spatial differential operator and
$u:\Omega\times[0,\infty)\to\R$ is the state.
We consider well-posed problems that define a \emph{solution operator}
$\mathcal F:u_0\mapsto u(\cdot,t^\star)$ that maps an input function (e.g., an
initial condition, a coefficient field, or a forcing) to a solution function
on $\Omega$.
Neural operators~\cite{kovachkiNeuralOperatorLearning2023}, and Fourier neural operators
(FNOs)~\cite{DBLP:conf/iclr/LiKALBSA21} in particular, learn $\mathcal F$ from data on a discrete grid of $\Omega$, but can accept inputs on any discretization of $\Omega$ without retraining. We review the FNO architecture in App.~\ref{app:fno_architecture}.

Classical Shapley attribution does not share this property: choosing a grid
fixes the player set, halving the spacing doubles the number of players. Even with a fixed number of grid points, when the grid structure (i.e., the position of each grid point) changes between samples, their attributions are not comparable and an amortized explainer cannot easily learn them, even with a grid-agnostic model. We therefore need a different concept of spatial attributions that is independent of the underlying grid.

\paragraph{The Aumann--Shapley value.}
In PDE problems, we can view every spatial point (e.g., on $[0, 1]$) as a player contributing to the overall outcome of the PDE evolution. Aumann and Shapley derive an infinite-player analog of the Shapley value by working with a non-atomic measure space $(\Omega,\mathcal C,\lam)$, where w.l.o.g. $\Omega=[0,1]$ represents the player space, $\mathcal C$ coalition space, and games are defined as maps $v:\mathcal C\to\R$ with $v(\emptyset)=0$.\footnote{For readability we omit some of the technical details, but review them in \cref{app:aumann_shapley}. All of our proofs still use the full mathematical framework.} They define a continuous analog of the value as a function that takes a game as input and outputs a linear function $\varphi(v): \mathcal C \rightarrow \R$ that satisfies $\varphi(v)(\Omega)=v(\Omega)$, $\varphi(v)(\emptyset) = 0$, and is finitely additive, i.e., $\varphi(v)(S\cup T) = \varphi(v)(S) + \varphi(v)(T)$ for disjoint $S,T\in\mathcal C$.

In contrast to the finite-player setting, they show that only certain classes of games admit a well-defined Aumann--Shapley value. The two classes we will use in this paper are \textbf{pNA} and \textbf{ASYMP} games. We say $v\in$ pNA if it has a finite-dimensional representation $v=f\circ(\mu_1,\dots,\mu_m)$ with $f\in C^1([0,1]^m)$ and $\mu_i$ non-atomic probability measures on $(\Omega,\mathcal C)$~\cite[after Def.~1]{neymanChapter56Values2002}.
The value function $\varphi v$ is then given analytically by the \emph{diagonal formula}~\cite{neymanChapter56Values2002}
\begin{equation}\label{eq:diagonal_formula}
(\varphi v)(S)
\;=\;
\int_0^1 \sum_{i=1}^m \mu_i(S)\,\partial_i f(t,\dots,t)\,dt
\qquad(S\in\mathcal C).
\end{equation}
A game $v$ belongs to the broader class \textbf{ASYMP} when its discrete Shapley values along \emph{finite} sub-fields of $\mathcal C$ converge to a common limit, the \emph{asymptotic value} (\cref{app:aumann_shapley}). The inclusion $pNA\subset ASYMP$ lets us approximate the continuum value of our operator games by finite-grid Shapley values in \cref{sec:method}. \cref{fig:heat2d_triptych} shows the reverse of this. OperatorSHAP was trained to predict Aumann-Shapley values, and can thus be discretized to predict the Shapley values over finite games of various granularity without retraining. 

Our goal will be to define a game that falls into the pNA class, such that we know it has a value we can learn. We especially want to ensure that common neural operator architectures fall into this class.
\section{Theoretical foundations}
\label{sec:theory}

\begin{figure}[t]
  \centering
  \includegraphics[width=\textwidth]{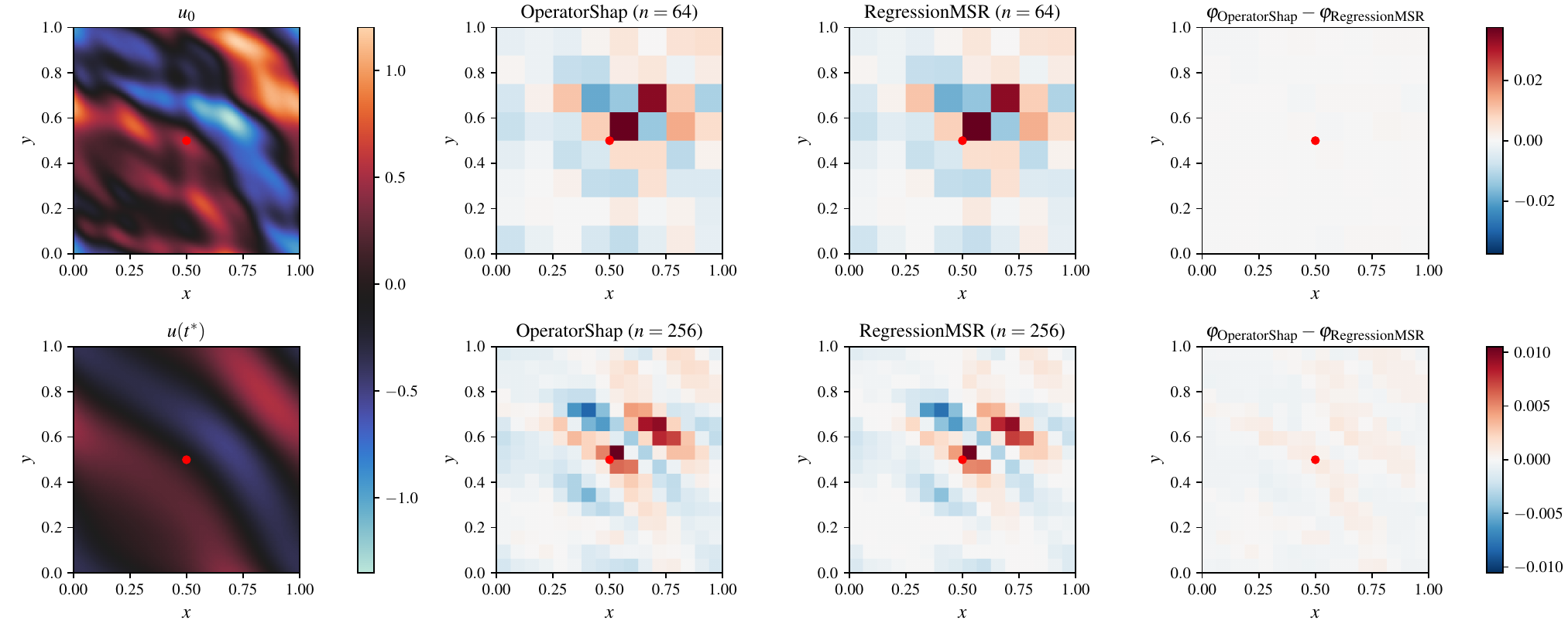}
  \caption{Explanations and input data of the 2D heat-equation. The left column shows input (\emph{top}) and label (\emph{bottom}) of the backbone model. The right three columns show the explanation of OperatorSHAP, RegressionMSR, and their differences, respectively. The top row used an $8\times 8$ grid, the bottom row a $16\times 16$ grid. OperatorSHAP could predict both without retraining and achieved comparable results to RegressionMSR.}
  \label{fig:heat2d_triptych}
\end{figure}

We begin with a short intuition of the functional-analytic setting for readers unfamiliar with the topic. We work with operators, i.e., functions that take functions as input and output other functions. We can define derivatives of these operators. The strictest one, the Fréchet derivative, is what is commonly understood as a ``derivative'', namely a linear map that approximates change in a neighborhood of a point. However, Fréchet differentiability is quite restrictive in function spaces and highly dependent on the input space we consider. We will therefore consider a well-behaved input space (the Sobolev space we define). Otherwise, even smooth activation functions, like the GeLU, would not be differentiable as an operator in function space. Moreover, common function spaces do not distinguish between two functions that differ on a finite set of points, which makes point evaluation not well-defined. This is problematic for our purposes, as we want to attribute the output at a specific spatial point to the model input. However, in the Sobolev space we consider, every function has a continuous representative which we can use to define point evaluation. We further use the torus as our spatial domain, which is a common choice in the PDE literature, and allows us to avoid technicalities related to boundary conditions.

\subsection{Pointwise region attributions}
\label{sec:masked_attr}

\paragraph{Notation.}
Let $\Omega=\mathbb T^d$ be the flat torus and
\[
X := H^s(\Omega;\R^{c_{\mathrm{in}}}),
\qquad
Y := H^s(\Omega;\R^{c_{\mathrm{out}}}),
\qquad
c_{\mathrm{in}},c_{\mathrm{out}}\in\mathbb N, s>d/2, 
\]
where $H^s = W^{s,2}$ is the Sobolev space of functions with $s$ weak derivatives in $L^2$.
We consider an operator $\mathcal F:X\to Y$, a baseline function $u_0\in X$, an input function $u\in X$, and a query point $x\in\Omega$ (write $h:=u-u_0$). We want to attribute the scalar change $\mathcal F(u)(x)-\mathcal F(u_0)(x)$ to spatial regions of the input. We will write $DF(u)[\cdot]: X\to Y$ for the Fréchet derivative of $F$ (which itself is an operator).

\paragraph{Scalarising the output.}
Because $s>d/2$, the Sobolev embedding theorem (\cite{runstSobolevSpacesFractional1996}, \S2.2.5) yields
a continuous embedding
\(
H^s(\Omega;\R^{c_{\mathrm{out}}}) \hookrightarrow C^0(\Omega;\R^{c_{\mathrm{out}}})
\) into the space of continuous functions.
In particular, for each $x\in\Omega$, point evaluation
$\operatorname{ev}_x:C^0(\Omega;\R^{c_{\mathrm{out}}})\to\R^{c_{\mathrm{out}}}$, $\operatorname{ev}_x(y)=y(x)$, is a
bounded linear map, and it is natural to study the functional $G_x := \operatorname{ev}_x
\circ \mathcal F$, which evaluates the output of our operator at a point $x$.

\begin{definition}[Smooth operator games]
\label{def:smooth_op_game}
For $i\in \{1, \dots, m\}$, define the smooth masks $\eta_i:\Omega \rightarrow [0,1]$, such that $\sum_i\eta_i\equiv 1$, $\int_\Omega \eta_i d\lambda > 0$ for all $i \in \{1, \dots, m\}$. Let $G_x: X\rightarrow \R$ be a continuously Fr\'echet differentiable operator evaluated at a point $x$. Then we define the smooth operator game $v_x: \mathcal{C}\rightarrow \R$ by
\[
v_x(S) \,:=\,
G_x\!\left(u_0+\sum_{i=1}^m \mu_i(S)\,\eta_i h\right)-G_x(u_0), \quad \mathrm{where} \; \mu_i(S) := \frac{\int_S \eta_i\,d\lam}{\int_\Omega \eta_i\,d\lam}\,.
\]
\end{definition}
We may think of the $\eta_i$ as smooth bump functions equally distributed on $\Omega$. For a coalition $S\in \mathcal{C}$ the game measures the change in the output at $x$, when each mask $\eta_i$ is scaled by the fraction of its mass that lies in $S$. For example, if $\Omega=[0,1]$ and $\eta_1 \approx 1$ on $[0, 1/2]$ and zero elsewhere, then $v_x([0, 1/4])$ measures the change in the output at $x$ when we activate the input of our operator at $[0, 1/2]$ by $h/2$ and use baseline elsewhere. In this case, the coalition $S=[0, 1/4]$ and $S'=[1/4, 1/2]$ would yield $v_x(S)=v_x(S')$. In practice, we can define the game with arbitrarily many masks, so we are not limited by this technicality.
\paragraph{Why do we need the smooth masks?}
The above construction technically relies on the operator being differentiable. We will achieve this for common neural operators, by considering a well-behaved input space. As our masked inputs must be elements of the input space as well, they must be equally well-behaved (smooth in our case). Beyond this technical requirement, this has a natural interpretation: the attribution of a spatial region should not change abruptly when a single point is added or removed.
\begin{restatable}[The smooth operator game is pNA]{theorem}{SmoothGamePNA}
\label{thm:smooth_game_pna}
Under the assumptions of \cref{def:smooth_op_game}, the smooth operator game $v_x$ lies in $pNA$, and admits a unique Aumann--Shapley value $\varphi v_x$ given by the diagonal formula~\cite[Thm.~2]{neymanChapter56Values2002}.
\end{restatable}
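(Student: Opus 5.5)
The plan is to exhibit $v_x$ directly in the finite-dimensional form $f\circ(\mu_1,\dots,\mu_m)$ required by the definition of pNA, and then apply the diagonal formula \eqref{eq:diagonal_formula} via \cite[Thm.~2]{neymanChapter56Values2002}.

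\textbf{Step 1: the representation.} Define $f:[0,1]^m\to\R$ by
\[
f(t_1,\dots,t_m) := G_x\Bigl(u_0+\sum_{i=1}^m t_i\,\eta_i h\Bigr)-G_x(u_0).
\]
By construction $v_x(S)=f(\mu_1(S),\dots,\mu_m(S))$ for every $S\in\mathcal C$.

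\textbf{Step 2: the $\mu_i$ are non-atomic probability measures.} Since $\eta_i\ge 0$ is bounded and $c_i:=\int_\Omega\eta_i\,d\lam>0$, each $\mu_i$ is a measure with density $\eta_i/c_i$ with respect to $\lam$. It satisfies $\mu_i(\Omega)=1$. It is non-atomic because it is absolutely continuous with respect to the non-atomic Lebesgue (Haar) measure $\lam$ on $\mathbb T^d$. Hence $\mu_i(S)\in[0,1]$, and $f$ is only ever evaluated on $[0,1]^m$.

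\textbf{Step 3: $f\in C^1([0,1]^m)$.} Write $f=\tilde G\circ A$, where $A:\R^m\to X$ is given by $A(t)=u_0+\sum_i t_i\eta_i h$.

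First, $A$ is affine with values in $X$. The reason is that multiplication by a smooth function is a bounded linear operator on $H^s(\mathbb T^d)$: $\|\eta_i h\|_{H^s}\le C_{\eta_i}\|h\|_{H^s}$. For integer $s$ this follows from the Leibniz rule. For fractional $s$ one can use the fact that $H^s$ is a Banach algebra for $s>d/2$ together with $\eta_i\in C^\infty\subset H^s$ on the compact torus. Hence $\eta_i h\in X$, and $A$ is smooth (in fact affine) with $DA(t)e_i=\eta_i h$.

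Second, $G_x=\operatorname{ev}_x\circ\mathcal F$ is continuously Fréchet differentiable by assumption. The chain rule then gives $\partial_i f(t)=DG_x(A(t))[\eta_i h]$. This is continuous in $t$, because $t\mapsto A(t)$ is continuous into $X$ and $u\mapsto DG_x(u)$ is continuous into $X^*$ by the $C^1$ assumption.

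Differentiability at boundary points of $[0,1]^m$ is not an issue: $f$ is the restriction of a $C^1$ function defined on all of $\R^m$, which is the usual meaning of $C^1([0,1]^m)$. Finally, $f(0)=0$, matching $v_x(\emptyset)=0$.

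\textbf{Step 4: conclusion.} With $v_x=f\circ(\mu_1,\dots,\mu_m)$, $f\in C^1$, and non-atomic probability measures $\mu_i$, we get $v_x\in pNA$. The value on pNA is unique, and Neyman's theorem gives the diagonal formula
\[
(\varphi v_x)(S)=\int_0^1\sum_i\mu_i(S)\,DG_x\Bigl(u_0+t\sum_j\eta_jh\Bigr)[\eta_i h]\,dt.
\]
Since $\sum_j\eta_j\equiv1$, the argument is $u_0+th$, so the formula is the integrated-gradient form.

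\textbf{Main obstacle.} The only non-bookkeeping step is Step 3: verifying that smooth multipliers act boundedly on $H^s$ for general real $s>d/2$, so that $A$ truly maps into $X$. If needed, I would handle this through the Fourier characterization of $H^s(\mathbb T^d)$ and the algebra property. A secondary subtlety is matching the paper's precise definition of pNA, which is stated for $C^1$ on the closed cube; Step 3 covers this because $f$ extends to a $C^1$ function on all of $\R^m$.
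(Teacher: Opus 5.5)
Your proposal is correct and follows the paper's proof essentially step for step. It uses the same factorization $v_x=f\circ(\mu_1,\dots,\mu_m)$, non-atomicity of the $\mu_i$ via absolute continuity with respect to Lebesgue measure, $f\in C^1$ via the chain rule through the affine map $a\mapsto u_0+\sum_i a_i\eta_i h$ (using that smooth multipliers preserve $H^s$), and then Neyman's Thm.~2; your remarks on the multiplier bound for fractional $s$ and on differentiability at the boundary of $[0,1]^m$ just make explicit what the paper asserts in passing.
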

\begin{proof}[Proof sketch]
$v_x$ factorizes as $v_x = f_x\circ(\mu_1,\dots,\mu_m)$ with $f_x(a):=G_x\!\bigl(u_0+\sum_i a_i\eta_i h\bigr)-G_x(u_0)$, a $C^1$ map on $[0,1]^m$ with $f_x(0)=0$, and the $\mu_i$ non-atomic probability measures on $(\Omega,\mathcal C)$. This is the canonical $pNA$ representation~\cite[after Def.~1]{neymanChapter56Values2002}; the value claim then follows from~\cite[Thm.~2]{neymanChapter56Values2002}. The full proof is in \cref{prf:smooth_game_pna}.
\end{proof}

\subsection{Common neural operator architectures satisfy this}
The Fr\'echet $C^1$ assumption underlying \cref{def:smooth_op_game} and \cref{thm:smooth_game_pna} is met by the standard Fourier neural operator architecture~\cite{DBLP:conf/iclr/LiKALBSA21}, provided we use a smooth activation as shown in the following proposition.

\begin{restatable}[Smooth FNOs are Fr\'echet $C^1$]{proposition}{SmoothFNOFrechet}
\label{prop:fno_frechet}
For $s>d/2$ and $\sigma\in C^\infty(\R)$ with $\sigma(0)=0$, the FNO of App.~\ref{app:fno_architecture} defines a Fr\'echet $C^1$ map $\mathcal F : H^s(\Omega;\R^{c_{\mathrm{in}}}) \to H^s(\Omega;\R^{c_{\mathrm{out}}})$.
\end{restatable}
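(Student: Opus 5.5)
The plan is to decompose the FNO into its constituent maps and show each is Fr\'echet $C^1$ between the appropriate Sobolev spaces. The chain rule for Fr\'echet derivatives then shows that their composition is Fr\'echet $C^1$ as well. Following App.~\ref{app:fno_architecture}, the FNO has the form $\mathcal F = Q\circ \mathcal L_T\circ\cdots\circ\mathcal L_1\circ P$. Here $P$ and $Q$ are pointwise (channelwise) linear lifting and projection maps. Each layer is $\mathcal L_\ell(v) = \sigma\bigl(W_\ell v + K_\ell v + b_\ell\bigr)$, where $W_\ell$ is a pointwise linear map, $K_\ell v = \mathcal{F}^{-1}(R_\ell\cdot \mathcal{F} v)$ is the spectral convolution with finitely many retained modes, and $b_\ell$ is a bias (constant or smooth). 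Everything is set in $H^s(\Omega;\R^{c})$ with the same $s>d/2$.

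\emph{Step 1 (linear parts).} Multiplication by a constant matrix is a bounded linear map $H^s\to H^s$. The spectral convolution is also bounded linear. On the torus the $H^s$ norm is $\sum_k (1+|k|^2)^s |\hat v(k)|^2$, and $K_\ell$ multiplies each Fourier coefficient by a bounded matrix $R_\ell(k)$ that vanishes outside a finite set of modes. Hence $\|K_\ell v\|_{H^s}\le \sup_k\|R_\ell(k)\|\,\|v\|_{H^s}$, and $K_\ell$ even maps into $C^\infty$. Adding a fixed bias $b_\ell\in H^s$ is an affine translation. Bounded linear and affine maps are trivially Fr\'echet $C^\infty$, so the pre-activation $v\mapsto W_\ell v+K_\ell v+b_\ell$ is Fr\'echet $C^1$ on $H^s$.

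\emph{Step 2 (Nemytskii operator).} This is the main obstacle: showing that $N_\sigma: v\mapsto \sigma\circ v$ is a Fr\'echet $C^1$ map $H^s(\Omega;\R^c)\to H^s(\Omega;\R^c)$ for $\sigma\in C^\infty$ with $\sigma(0)=0$, with $\sigma$ applied componentwise.

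\begin{itemize}
\item \emph{$N_\sigma$ maps $H^s$ into $H^s$.} I would invoke the classical composition results for $s>d/2$ (Moser-type estimates; cf.\ \cite{runstSobolevSpacesFractional1996}, Ch.~5). They give $\|\sigma\circ v\|_{H^s}\le C(\|v\|_{L^\infty})\|v\|_{H^s}$ when $\sigma(0)=0$, which is where this hypothesis enters. On the torus the constant term would also be harmless, since constants lie in $H^s(\mathbb T^d)$.
\item \emph{Identifying the derivative.} The candidate is $DN_\sigma(v)[w]=\sigma'(v)\,w$. It is a bounded operator on $H^s$ because $H^s$ is a Banach algebra for $s>d/2$, and because $\sigma'(v)-\sigma'(0)\in H^s$ by the composition result applied to $\sigma'-\sigma'(0)$.
\item \emph{Fr\'echet remainder.} I would write $\sigma(v+w)-\sigma(v)-\sigma'(v)w = w^2\int_0^1(1-\tau)\sigma''(v+\tau w)\,d\tau$. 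The algebra property and the composition estimate, applied uniformly for $\|w\|_{H^s}\le 1$, then bound its $H^s$ norm by $C\|w\|_{H^s}^2$. The constant depends on $\|v\|_{H^s}$ through the Sobolev embedding into $L^\infty$.
\item \emph{Continuity of the derivative.} The operator norm of $DN_\sigma(v)-DN_\sigma(\tilde v)$ is bounded by $C\|\sigma'(v)-\sigma'(\tilde v)\|_{H^s}$, or that norm plus a constant-term difference. This is controlled by the same mean-value identity applied to $\sigma'$.

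If $s$ is non-integer, one has to be careful that the composition estimates hold in the Bessel-potential/Slobodeckij scale. Here I would rely on \cite{runstSobolevSpacesFractional1996}; for integer $s$ the Fa\`a di Bruno formula plus Gagliardo--Nirenberg suffices.
\end{itemize}

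\emph{Step 3 (composition).} Each $\mathcal L_\ell$ is the composition of the $C^1$ affine map from Step~1 with the $C^1$ Nemytskii map from Step~2, so each $\mathcal L_\ell$ is $C^1$. Composing with the bounded linear maps $P$ and $Q$ and applying the chain rule repeatedly yields that $\mathcal F$ is Fr\'echet $C^1$ from $H^s(\Omega;\R^{c_{\mathrm{in}}})$ to $H^s(\Omega;\R^{c_{\mathrm{out}}})$. If $P$ or $Q$ are shallow pointwise MLPs with activation $\sigma$, they are handled identically via Step~2.
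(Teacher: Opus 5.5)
Your proposal is correct and follows essentially the same route as the paper's proof. It shows that the affine blocks are bounded on $H^s$ (with the identical Fourier-side estimate $\|K_\ell v\|_{H^s}\le \sup_k\|R_{\ell,k}\|_{\mathrm{op}}\|v\|_{H^s}$), that the Nemytskii map $v\mapsto\sigma\circ v$ is Fr\'echet $C^1$ on $H^s$ for $s>d/2$ with derivative $w\mapsto\sigma'(v)w$, and then applies the chain rule. The only difference is in Step~2, where the paper simply cites Runst--Sickel's Nemytskij theorem, whereas you sketch it directly via Moser-type composition estimates, the Banach-algebra property, and an integral Taylor remainder — a valid justification of the same step.
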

\begin{proof}[Proof sketch:]
  All affine blocks are bounded linear maps on $H^s$, and the non-linearity is a smooth Nemitskii superposition operator, which is Fr\'echet $C^1$ on $H^s$ for $s>d/2$. The composition of these blocks is therefore also Fr\'echet $C^1$. The full proof is in App.~\ref{prf:fno_frechet}.
\end{proof}

\begin{remark}
The proof uses only three ingredients: bounded linearity of every affine block on $H^s$, the Banach-algebra/Nemytskij property of $H^s$ for $s>d/2$, and closure of Fr\'echet $C^1$ maps under composition. The same template covers most modern neural-operator architectures whose building blocks are either bounded linear maps on $H^s$ or smooth Nemytskij superpositions. We find empirically that the framework is quite robust if the activation is not smooth (e.g., the ReLU) (\cref{sec:experiments}). Future work could investigate this theoretically.
\end{remark}

\paragraph{Training an Aumann--Shapley explainer.} Since pNA $\subset$ ASYMP~\cite[Thm.~4]{neymanChapter56Values2002}, the game $v$ above also admits an \emph{asymptotic value}: along any refining sequence of finite coalition sets $\pi_0\subset \pi_1\subset \cdots$ with $\pi_k \subset \mathcal C$, the \emph{discrete Shapley values} $\psi v_{\pi_k}(S)$ of the game restricted to $\pi_k$ converge to $(\varphi v)$ (\cref{app:aumann_shapley}). This means that we can use the discrete Shapley values to train an approximator of the Aumann--Shapley values. Here, more granular samples will contribute to a more accurate explainer, but also less granular results can help to drive the overall convergence of the explainer, as we will see in the experiments. \cref{fig:shap_vs_operator} shows how discrete Shapley values approximate Aumann--Shapley values when we account for the size of the region the Shapley values represent.

\begin{figure}[t]
  \centering
  \includegraphics[width=\textwidth]{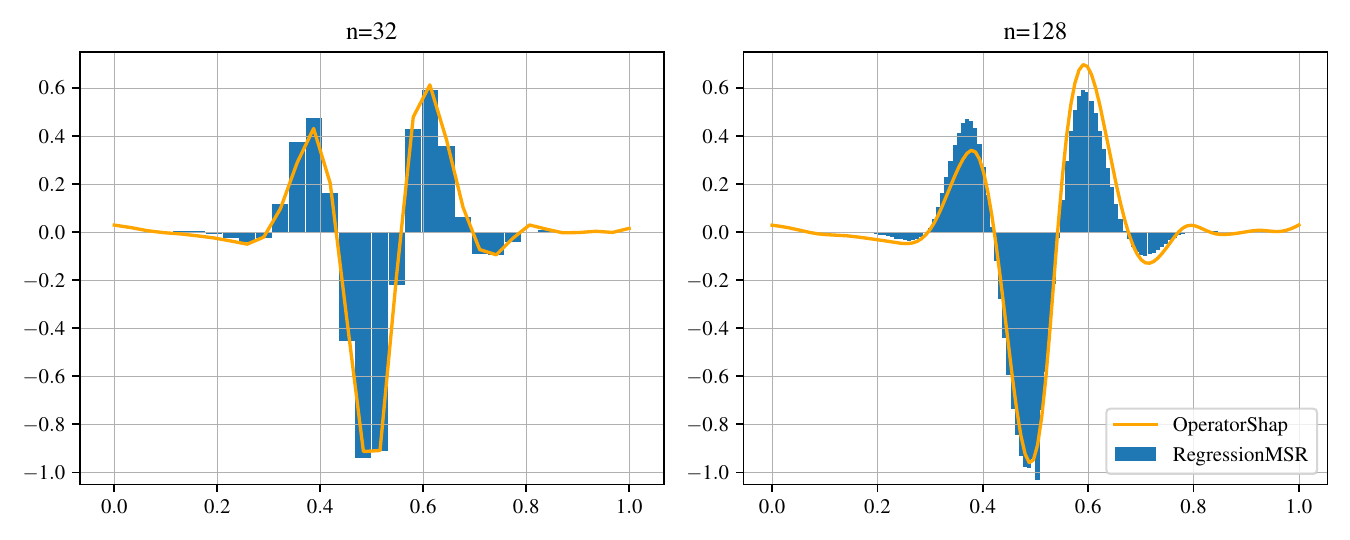}
  \caption{Comparison of classical Shapley values (blue bars) and Aumann--Shapley density (orange line) for player numbers (grid sizes) 32 (\emph{left}) and 128 (\emph{right}) for 1D Burgers equation, approximated via RegressionMSR (for Shapley) and OperatorSHAP (Aumann--Shapley). In this figure, we scale Shapley values by the number of players to account for shrinking per-player attribution. We can see that with increasing player number, classical Shapley values approach the Aumann--Shapley density.
  }
  \label{fig:shap_vs_operator}
\end{figure}

\subsection{Connection to Integrated Gradient}
In this section, we show how the derived formula connects to the Integrated Gradient method.
\begin{definition}[Masked pointwise attribution]
\label{def:masked_attr}
Let $\mathcal F:X\to Y$ be Fr\'echet $C^1$, fix $x\in \Omega$ and
$\eta\in C^\infty(\Omega)$, and set $G_x=\operatorname{ev}_x\circ \mathcal F$.
For $u_0,u\in X$, with $h=u-u_0$, define the \emph{masked attribution at output
location $x$ associated with $\eta$} by
\[
\operatorname{Attr}_x^\eta(u_0,u) \;:=\; \int_0^1 DG_x(u_0+t h)[\eta h]\,dt.
\]
If $\mathcal F$ is vector-valued, the definition is understood componentwise. 
\end{definition}
The above definition equals precisely the integrated gradients formula \cite{sundararajanAxiomaticAttributionDeep2017}(review in \cref{app:integrated_gradients}) but for function-valued games. Usually, we integrate the directional derivative of a vector-valued function, but here we integrate the directional derivative of a function-valued map. Choosing multiple smooth masks $\eta_1, \dots, \eta_m$ that sum to one $\sum_{j=1}^m \eta_j\equiv 1$, allows us again to achieve efficiency.

\begin{restatable}[Partition-of-unity attribution]{theorem}{PartitionUnityAttr}
\label{thm:partition_unity_attr}
Let $\mathcal F:X\to Y$ be Fr\'echet $C^1$, fix $x\in\Omega$, and set
$G_x=\operatorname{ev}_x\circ \mathcal F$.
Let $\eta_1,\dots,\eta_m\in C^\infty(\Omega)$ satisfy $\sum_{j=1}^m \eta_j\equiv 1$.
Then, for every $u_0,u\in X$,
\[
\mathcal F(u)(x)-\mathcal F(u_0)(x)
=
\sum_{j=1}^m \operatorname{Attr}_x^{\eta_j}(u_0,u).
\]
\end{restatable}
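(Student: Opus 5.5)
The plan is to reduce everything to the fundamental theorem of calculus along the straight path from $u_0$ to $u$, and then use linearity of the Fréchet derivative together with the partition of unity. First I check that $G_x=\operatorname{ev}_x\circ\mathcal F$ is Fréchet $C^1$ as a map $X\to\R^{c_{\mathrm{out}}}$. By the Sobolev embedding quoted above (valid since $s>d/2$), $\operatorname{ev}_x$ is a bounded linear map on $Y$. The chain rule then gives $DG_x(w)[k]=\mathcal F'(w)[k](x)$, which depends continuously on $w$. Working componentwise, as in \cref{def:masked_attr}, I may assume the output is scalar.

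Next I define the path $\gamma(t):=u_0+th$ for $t\in[0,1]$. This is an affine $C^\infty$ curve in $X$ with $\gamma'(t)=h$. The scalar function $g(t):=G_x(\gamma(t))$ is then $C^1$ on $[0,1]$, with $g'(t)=DG_x(u_0+th)[h]$. Its continuity follows from continuity of $w\mapsto DG_x(w)$ in operator norm composed with the continuous path. The fundamental theorem of calculus gives
\[
\mathcal F(u)(x)-\mathcal F(u_0)(x)=g(1)-g(0)=\int_0^1 DG_x(u_0+th)[h]\,dt .
\]

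To split the integrand, I need each $\eta_j h\in X$. Multiplication by a $C^\infty$ function on the compact torus is a bounded operator on $H^s$; alternatively, $H^s$ is a Banach algebra for $s>d/2$ and contains $C^\infty(\Omega)$. Since $\sum_j\eta_j\equiv1$, we have $h=\sum_j\eta_j h$ as elements of $X$. Linearity of $DG_x(u_0+th)[\cdot]$ then gives $DG_x(u_0+th)[h]=\sum_j DG_x(u_0+th)[\eta_j h]$ for each $t$. Each summand $t\mapsto DG_x(u_0+th)[\eta_j h]$ is continuous, hence integrable, so the finite sum can be exchanged with the integral. This yields $\sum_j \operatorname{Attr}_x^{\eta_j}(u_0,u)$.

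The main obstacle is not the calculus but the well-posedness bookkeeping. One needs $\eta_j h\in X$, so that $DG_x(\cdot)[\eta_j h]$ is defined, and one needs enough regularity of $t\mapsto DG_x(u_0+th)$ to apply the FTC and to swap sum and integral. Both are handled by the multiplier property of smooth functions on $H^s(\mathbb T^d)$ and by the $C^1$ hypothesis. I would state the multiplier property as a brief lemma, estimating $\|\eta h\|_{H^s}\le C_{\eta}\|h\|_{H^s}$ via Fourier multipliers or the Leibniz rule.
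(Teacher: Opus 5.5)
Your proposal is correct and follows the paper's proof essentially step for step: the fundamental theorem of calculus along $t\mapsto u_0+th$, then $h=\sum_j \eta_j h$ in $X$, linearity of $DG_x(u_0+th)[\cdot]$, and exchanging the finite sum with the integral. Your extra bookkeeping (boundedness of $\operatorname{ev}_x$ on $Y$, so that $G_x$ is $C^1$, and the multiplier property ensuring $\eta_j h\in X$) makes explicit what the paper uses implicitly; the paper cites the multiplier fact in its proof of \cref{thm:smooth_game_pna}.
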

\begin{proof}[Proof sketch:]
  We use linearity to pull the sum into the integral and use $\sum_j \eta_j h = h$. The rest follows from the fundamental theorem of calculus. The full proof is in \cref{prf:partitionOfUnity}.
\end{proof}

After substituting the partial derivative of $f_x$ from the proof sketch of \cref{thm:smooth_game_pna}, evaluated along the diagonal $a=(t,\dots,t)$, and using the identity $\sum_j\eta_j\equiv 1$ to collapse the inner sum, the diagonal formula \eqref{eq:diagonal_formula} reduces to
\[
(\varphi v)(S)
\;=\;
\int_0^1 \sum_{i=1}^m \mu_i(S) DG_x\!\Big(u_0 + t\sum_{j=1}^m \eta_j h\Big)[\eta_i h]\,dt
\;=\;
\sum_{i=1}^m \mu_i(S) \int_0^1 DG_x\!\Big(u_0 + th\Big)[\eta_i h]\,dt,
\]
i.e., the Aumann--Shapley allocation to a coalition $S$ is the $\mu(S)$-weighted sum of the masked pointwise attributions of \cref{def:masked_attr}.

\section{OperatorSHAP: an FNO explainer trained with FastSHAP}
\label{sec:method}

\begin{wrapfigure}{r}{0.48\textwidth}
\vspace{-\baselineskip}
\begin{algorithm}[H]
\DontPrintSemicolon
\textbf{Input:} backbone $\mathcal F$, baseline $\uref$, query $x^\star$, learning rate $\alpha$, $\bar{v}:=v(\Omega)-v(\emptyset)$.\\
\textbf{Output:} FNO explainer $\Phi_\theta$.\\[0.3em]
initialise $\Phi_\theta$\;
\While{not converged}{
  sample $u\sim p(u)$, coalition $S\sim p_w$\;
  $\phi \gets \Phi_\theta(u)$\tcp*{field on $\Omega$}
  $B \gets \text{area}(u,S)$\tcp*{get area of S }
  $\hat\phi_S \gets \!\int_{B}\!\phi$\tcp*{coarsen}
  \If{normalize}{
    $\hat\phi_S \gets \hat\phi_S + (|B|/|\Omega|)\bigl(\bar{v}-\int_{\Omega}\!\phi\bigr)$\;
  }
  $\mathcal L \gets \bigl(v(S)-\hat\phi_S\bigr)^2$\;
  $\theta \gets \theta - \alpha\,\nabla_\theta\mathcal L$\;
}
\caption{OperatorSHAP training}
\label{alg:operatorshap}
\end{algorithm}
\vspace{-\baselineskip}
\end{wrapfigure}

The previous section showed that we can define a smooth operator game with a well-defined Aumann--Shapley value. That value itself is a function from the coalition space to the reals $\varphi v:\mathcal C\to\R,\,S \mapsto (\varphi v)(S)$. We know this function is finitely additive, i.e., $(\varphi v)(S\cup T) = (\varphi v)(S) + (\varphi v)(T)$ for disjoint $S,T\in\mathcal C$, and that it satisfies the efficiency property $(\varphi v)(\Omega)=v(\Omega)$. We are now interested in training an amortized explainer $\Phi_\theta:X\to Y$ to approximate this value function. To get the value of a coalition $S$, we can then integrate the output field $\Phi_\theta(u)$ over the area of $S$.

For this, we note that by the inclusion pNA $\subset$ ASYMP~\cite[Thm.~4]{neymanChapter56Values2002} (see \cref{app:aumann_shapley} for definitions), the game $v$ of the preceding paragraph possesses an \emph{asymptotic value}. Therefore, we will train a grid-agnostic explainer, like an FNO, to approximate the discrete Shapley values at multiple resolutions.

\cref{alg:operatorshap} gives the resulting training loop. At each step we draw an input $u\in\R^p$ (where $p$ can be different for each sample) and fix a grid of partition cells $B_1, \dots, B_p$ for that sample. We then evaluate the explainer to obtain a continuous attribution field $\phi=\Phi_\theta(u)$, coarsen it to a discrete vector $\hat\phi\in\R^p$ by integrating over the partition cells $B_i$, and minimize the FastSHAP weighted MSE $(v(S)-\hat\phi^\top\ind_S)^2$ for coalitions $S\subseteq\{1,\dots,p\}$ drawn from the Shapley kernel $w_p$.\footnote{$w_p(S)\propto \tfrac{p-1}{\binom{p}{|S|}|S|(p-|S|)}$, the KernelSHAP weight~\cite{lundbergUnifiedApproach2017}, which puts most mass on very small or very large coalitions and is what makes the WLS objective unbiased for the Shapley value.} For equally spaced grids, if our input has 32 spatial points, the default FNO implementation \citep{kossaifi2025librarylearningneuraloperators} will output 32 spatial points at the same locations. We then multiply each by $1/32$ times the spatial range to approximate the integration. Hard normalization~\cite[Alg.~1]{jethaniFastSHAPRealTimeShapley2022} is applied to enforce efficiency, $\sum_i\hat\phi_i=v(\Omega)$. \cref{fig:shap_vs_operator} visualizes the attributions.

\begin{figure}[t]
 \label{fig:runtime}
  \centering
  \includegraphics[width=\textwidth]{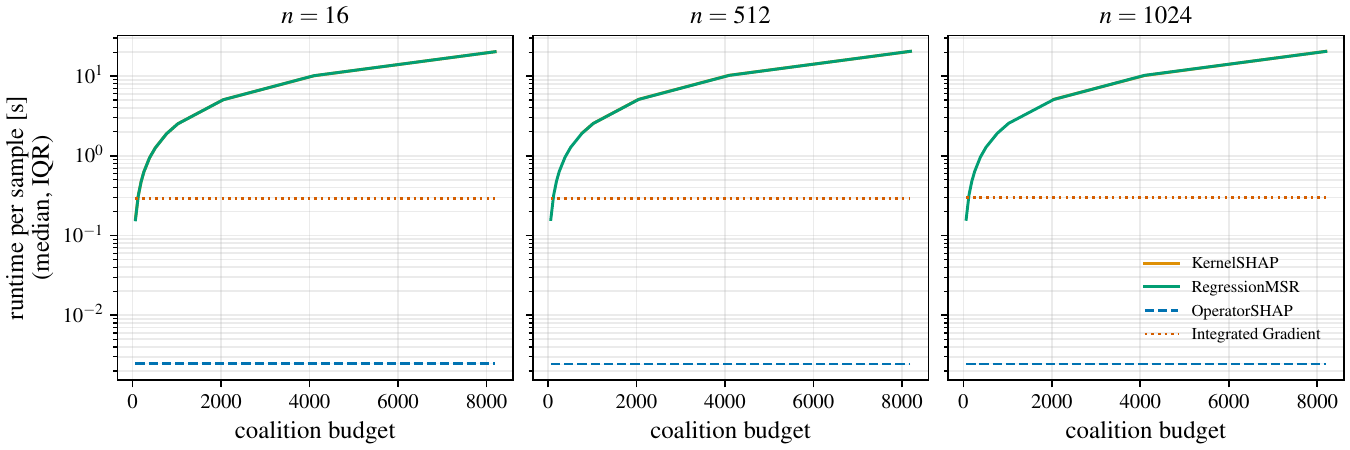}
  \caption{Runtime of attribution methods vs.\ ground truth (for $n=16$: Exact Shapley; otherwise: RegressionMSR with 300k budget) for different player numbers (spatial resolution) 16 (\emph{left}), 512 (\emph{middle}), 1024 (\emph{right}) on the Burgers 1D PDE. The same OperatorSHAP model was used at all resolutions. OperatorSHAP performs faster than KernelSHAP, RegressionMSR, or Integrated Gradients by a significant margin. We show approximation quality (NRSME) in \cref{fig:burgers1d_budget_curves_error}.}
  \label{fig:burgers1d_budget_curves_runtime}
\end{figure}

\section{Experiments}
\label{sec:experiments}

\paragraph{Setup.}
We use nine PDE datasets on structured 1D/2D grids and the two MeshGraphNets benchmarks~\cite{pfaffLearningMeshBased2021}. Backbones (FNO~\cite{DBLP:conf/iclr/LiKALBSA21}, LocalFNO~\cite{liuNeuralOperatorsLocalized2024}, or GINO~\cite{liGeometryInformedNeural2023}) are trained on the solution operator; the OperatorSHAP explainer mirrors the backbone family and is trained on the FastSHAP loss of \cref{sec:method}. To simulate heterogeneous grids on the structured PDEs, we stride each sample to a random grid granularity $n\in\{16, 32, 64, \dots, 1024\}$. We compare against exact Shapley ($n\le 16$), KernelSHAP~\cite{lundbergUnifiedApproach2017}, RegressionMSR~\cite{witterRegressionadjustedMonteCarlo2026}, and Integrated Gradients (IG, \cref{sec:masked_attr}). Metrics are Pearson correlation and normalized RMSE against the reference (RegressionMSR with $3\!\cdot\!10^5$ backbone evaluations), plus faithfulness ~\cite{DBLP:journals/jmlr/TsaiYR23} (weighted $R^2$ over $10{,}000$ random coalitions) as a reference-free diagnostic. Full setup in \cref{app:experimental_setup}; hardware in \cref{app:compute}. We test for:

\paragraph{(i) Attribution quality across resolutions.}
\cref{tab:cross_pde_metrics_short_median} reports per-sample Pearson, NRMSE, and the weighted-$R^2$ Faithfulness score (full results in \cref{tab:cross_pde_metrics_full_top} and \cref{tab:cross_pde_metrics_full_bot}). The same trained explainer produces attributions across $n\in\{16, 32, \dots, 1024\}$, while classical methods needed to be re-run at each resolution. OperatorSHAP achieves comparable and often superior quality to RegressionMSR and IG across resolutions.

\paragraph{(ii) Inference speed.}
Once trained, OperatorSHAP returns full attributions in a single forward pass, with zero per-sample budget. We illustrate the speed for selected resolutions in \cref{fig:burgers1d_budget_curves_runtime}, and show the NMSE of attribution methods vs.\ ground truth in \cref{fig:burgers1d_budget_curves_error}. OperatorSHAP is orders of magnitude faster than all baselines, while maintaining a low error. Again, the same trained explainer is used across resolutions, while baselines needed to be re-run at each resolution. All inference time experiments were run on an A100 GPU.

\paragraph{(iii) Amortization time.}
Training time of OperatorSHAP is still costly. To illustrate when amortization pays off, we show the cumulative runtime of OperatorSHAP vs.\ baselines as the number of inference samples increases in \cref{fig:amortization}. This only illustrates the accumulated runtime, it should be noted however, that training time is often less important than inference time, as the latter is usually the bottleneck for downstream applications.

\begin{table}[h]
  \centering
  \caption{Cross-PDE attribution quality across resolutions $n$ on four
    representative PDEs. Each (PDE, $n$, method) entry
    reports the per-sample median over $30$ test inputs ($10$ for MGN) on the top line and
    the inter-quartile range in brackets below. Metrics:
    Pearson correlation and NRMSE against the per-PDE reference attribution,
    and the weighted-$R^2$ Faithfulness score. RegressionMSR
    uses a per-sample budget of $2\cdot\mathrm{n\_players}$ (minimum budget $1024$) backbone evaluations; IG is calculated via a 50-step Riemann sum. Across resolutions OperatorSHAP uses the same model per PDE. Higher Pearson and $R^2$ are better; lower NRMSE
    is better.}
  \label{tab:cross_pde_metrics_short_median}
  \resizebox{\textwidth}{!}{%
    \input{./tables/all_pde_metrics_short_median.tex}  }
\end{table}

\section{Limitations}
\label{sec:limitations}
While amortized Shapley can be incredibly useful when needing explanations at scale, few people outside the industry will need that many explanations without changing the model in between. For companies that consistently need to monitor their AI application, amortization can be very time-saving, however. Furthermore, PDE surrogate models often need high spatial resolution to make an accurate prediction. As in classical Shapley, however, more input features exponentially complicate accurate attributions, necessitating binning approaches that could obscure fine attribution spikes. Still, modern approaches to (amortized) SHAP have started to mitigate the computational burden while retaining accurate attributions. For the theory: While our restriction to smooth masks is irrelevant for finite dimensional data and differentiable operators can be robustly relaxed in practice, our assumptions are nonetheless restrictions that could potentially be relaxed in future work by deeper functional analysis concepts or other definitions of games. In our experiments, it should be noted that RMSR could be arbitrarily improved by increasing the budget. Also, as there is no ground truth available, NRMSE and Pearson should only be interpreted relative to each other. We also did not test with different baselines, which could also influence the results. Lastly, we want to mention the risk that may arise when models produce false explanations. We are still working with black-box models whose output should not be trusted blindly when the result could be harmful in any way.
\section{Conclusion}
\label{sec:conclusion}
We have shown that Shapley attribution can be amortized for neural operators in a grid-agnostic way by training an FNO explainer on a FastSHAP-like objective and connecting the resulting discrete attributions to the Aumann--Shapley value in the continuous limit. This allows direct evaluation of Shapley values on arbitrary grids and, in particular, training on datasets with heterogeneous grids.
For future work, the most immediate idea is to swap FastSHAP for stronger Shapley estimators. FastSHAP gives us a convenient training objective, but any progress on amortized Shapley estimation should plug into our framework directly and improve attribution quality with few architectural changes.
Another direction is to functionalize other attribution methods in the same way. Integrated gradients and the Aumann--Shapley diagonal formula both rely on pointwise outputs, but the underlying ideas are not specific to finite-dimensional inputs and should generalize to arbitrary Banach spaces.
It may also be possible to relax the smoothness assumption in our framework by showing membership of the game in the ASYMP class. If possible, this would likely require more complicated mathematical machinery, but could lead to a stronger theoretical foundation for attributions in function spaces.

\begin{ack}
Joshua Stiller is funded by the Munich Center of Machine Learning (MCML). Felix Czaja gratefully acknowledges funding by the German Research Foundation (Deutsche Forschungsgemeinschaft, DFG) – GRK 3081 – Project number 534429653. The authors gratefully acknowledge the computational and data resources provided by the Leibniz Supercomputing Centre (www.lrz.de).
\end{ack}

\small
\bibliographystyle{plainnat}
\bibliography{OperatorSHAP}


\appendix
\newpage

\section{Additional background information}
\subsection{A short primer on PDEs and solution operators}
\label[appendix]{app:pde_primer}

Many physical systems are described by \emph{partial differential equations} (PDEs): the temperature in a metal bar, the velocity of a fluid in a pipe, the air pressure around an aircraft wing, the displacement of a vibrating string. A PDE specifies a local rule---how a quantity changes in time and space---that, together with an initial state and conditions at the boundary of the spatial region, determines the system's entire evolution.

\paragraph{The heat equation.}
The cleanest example is the one-dimensional heat equation. Imagine a thin rod of unit length, and let $u(x,t)$ denote its temperature at position $x\in[0,1]$ and time $t\ge 0$. If we hold both ends of the rod at temperature zero and start from some initial profile $u_0(x)$, the temperature evolves according to
\[
\partial_t u(x,t) \;=\; \kappa\,\partial_x^2 u(x,t),
\qquad u(0,t)=u(1,t)=0,
\qquad u(\cdot,0)=u_0,
\]
where $\kappa>0$ is a material constant (the diffusion coefficient). In plain words: at each point, the rate at which the temperature changes in time is proportional to how curved the temperature profile is at that point. Sharp peaks and dips smooth out, and the rod gradually cools toward zero.

\paragraph{The solution operator.}
What matters for our explanation problem is not the rule itself, but its outcome: starting from initial temperature $u_0$ and letting the system run for a fixed time $t^\star$, we end up with some new profile $u(\cdot,t^\star)$. The map that turns the initial profile into the final one,
\[
\mathcal F:\; u_0\;\longmapsto\; u(\cdot,t^\star),
\]
is the \emph{solution operator}. It takes a function as input and returns a function as output, independently of how either is sampled on a grid. Neural operators are parametric models that learn $\mathcal F$ from data; the attribution question studied in this paper asks which spatial regions of the input $u_0$ drive the value of $\mathcal F(u_0)$ at a query point $x^\star$.


\subsection{Integrated gradients}
\label[appendix]{app:integrated_gradients}
The differential analog of Shapley attribution in finite dimensions is
integrated gradients~\cite{sundararajanAxiomaticAttributionDeep2017}: for a continuously differentiable function $f:\R^n\to\R$ with baseline $x_0$, the contribution of coordinate $i$ to
$f(x)-f(x_0)$ is the path integral of its partial derivative,
\[
\operatorname{IG}_i(x_0,x)
\;=\;
(x_i-x_{0,i})\int_0^1 \partial_i f\bigl(x_0+t(x-x_0)\bigr)\,dt,
\]
which satisfies sensitivity, and
\emph{completeness} $\sum_i \operatorname{IG}_i(x_0,x)=f(x)-f(x_0)$.

\subsection{The Aumann--Shapley value: mathematical details}
\label[appendix]{app:aumann_shapley}

This appendix records the formal definitions used in \cref{sec:theory}. Our exposition follows~\cite{neymanChapter56Values2002}, to which we refer for proofs and historical context.

\paragraph{Spaces of games.}
The space of players is a measurable space $(\Omega,\mathcal C)$ that we assume isomorphic to $([0,1],\mathcal B([0,1]))$, with $\mathcal B$ the Borel $\sigma$-algebra. Members of $\Omega$ are \emph{players}, members of $\mathcal C$ are \emph{coalitions}, and a \emph{game} is a real-valued function $v:\mathcal C\to\R$ with $v(\emptyset)=0$. A game is \emph{monotonic} if $v(S)\ge v(T)$ whenever $S\supseteq T$, and of \emph{bounded variation} if it is the difference of two monotonic games; its \emph{variation} is
\[
\|v\| \;:=\; \sup\Bigl\{\textstyle\sum_{j=1}^n |v(S_j)-v(S_{j-1})|\;:\; S_0\subseteq S_1\subseteq\cdots\subseteq S_n\text{ in }\mathcal C\Bigr\}.
\]
Under $\|\cdot\|$ the space $BV$ of all bounded-variation games is a Banach algebra. Within $BV$ live the finitely additive games $FA=\{v: v(S\cup T)=v(S)+v(T)\text{ for disjoint }S,T\}$, the countably additive measures $M\subset FA$, and the non-atomic measures $NA\subset M$, i.e.\ measures $\mu\in M$ for which every singleton in $\mathcal C$ has $\mu$-measure zero. Each of $NA,M,FA,BV$ is invariant under the action $(\Theta_*v)(S):=v(\Theta S)$ of the group $\mathcal G$ of measurable automorphisms of $(\Omega,\mathcal C)$. We write $Q^+$ for the monotonic elements of a set of games $Q$, and $Q^1$ for those with $v(\Omega)=1$.

\paragraph{The value axioms.}
Fix a linear subspace $Q\subseteq BV$ that is \emph{symmetric} in the sense of being closed under $\Theta_*$ for all $\Theta\in\mathcal G$. Following~\cite[Def.~1]{neymanChapter56Values2002}, a \emph{value on $Q$} is a linear map $\varphi:Q\to FA$ that is symmetric ($\varphi(\Theta_* v)=\Theta_*(\varphi v)$ whenever $v$ and $\Theta_*v$ both lie in $Q$), positive ($\varphi v\in BV^+$ whenever $v\in Q^+$), and efficient ($(\varphi v)(\Omega)=v(\Omega)$ for every $v\in Q$). On finite player sets these four axioms uniquely determine the classical Shapley value~\cite[Thm.~1]{neymanChapter56Values2002}; the substantive content of the infinite-player theory is that on several natural infinite-dimensional subspaces of $BV$ the same axioms still pin down a unique map $\varphi$.

\paragraph{The space $pNA$.}
The most important such subspace is $pNA$, the closure in $\|\cdot\|$ of the subalgebra generated by $NA$, equivalently the closed linear span of powers of non-atomic measures~\cite[Lem.~7.2]{aumannValuesNonAtomicGames2015}. The canonical examples are games of the form $v=f\circ\mu$ where $\mu=(\mu_1,\dots,\mu_m)\in(NA^1)^m$ is a vector of non-atomic probability measures and $f\in C^1([0,1]^m)$ with $f(0)=0$; such $v$ lies in $pNA$~\cite[after Def.~1]{neymanChapter56Values2002}, and this is the form taken by the smooth operator games of \cref{thm:smooth_game_pna}. On $pNA$ there is a unique value~\cite[Thm.~2]{neymanChapter56Values2002}, given on the canonical games by the \emph{diagonal formula}
\[
(\varphi v)(T)\;=\;\int_0^1 \sum_{i=1}^m \mu_i(T)\,\partial_i f(t,\dots,t)\,dt,\qquad T\in\mathcal C,
\]
i.e.\ the integral of the partial derivatives of $f$ along the diagonal $a=(t,\dots,t)$ of the unit cube, weighted by the measures $\mu_i$.

\paragraph{The asymptotic value.}
A second, constructive route to a value approximates a continuum game by finite ones. Given a coalition $T\in\mathcal C$, a sequence $(\pi_k)_{k\ge 1}$ of finite measurable subfields of $\mathcal C$ is \emph{$T$-admissible} if $T\in\pi_1$, $\pi_k\subseteq\pi_{k+1}$ for all $k$, and $\bigcup_k \pi_k$ generates $\mathcal C$. For each $k$, the restriction $v_{\pi_k}$ of $v$ to $\pi_k$ is a finite-player game and admits the classical Shapley value $\psi v_{\pi_k}$; we ask whether these discrete Shapley values converge as the partition refines.

\begin{definition}[Asymptotic value, \protect{\cite[Def.~4]{neymanChapter56Values2002}}]
\label{def:asymp_value}
A finitely additive game $\varphi v$ is the \emph{asymptotic value} of $v\in BV$ if, for every $T\in\mathcal C$ and every $T$-admissible sequence $(\pi_k)$,
\[
\lim_{k\to\infty}\bigl(\psi v_{\pi_k}\bigr)(T)\;=\;(\varphi v)(T).
\]
The set of all bounded-variation games admitting an asymptotic value is denoted $ASYMP$; when it exists, the asymptotic value is unique, and the map $v\mapsto\varphi v$ is itself a value on $ASYMP$~\cite[Thm.~4]{neymanChapter56Values2002}.
\end{definition}

The link between the two routes is the inclusion $pNA\subset ASYMP$, with the asymptotic value coinciding with the unique value on $pNA$~\cite[Thm.~4]{neymanChapter56Values2002}. This is what we exploit in \cref{sec:method}: the discrete Shapley values $\psi v_{\pi_k}$ of any refining sequence of finite subfields converge to the same limit---the Aumann--Shapley value $(\varphi v)(T)$---which justifies training a discretization-based explainer at multiple grid resolutions and reading off the continuum density at the limit.


\subsection{Reviewing the FNO architecture}
\label{app:fno_architecture}
We repeat the definition of a truncated $L$-layer FNO with lifting and projection: starting from $v_0 := P_{\mathrm{up}}u + b_{\mathrm{up}}$, define recursively
\begin{equation}
\label{eq:fno_recursion}
v_{\ell+1} = \sigma\bigl(A_\ell v_\ell + b_\ell\bigr),
\qquad \ell=0,\dots,L-1,
\end{equation}
and set $\mathcal F(u) := P_{\mathrm{down}}v_L + b_{\mathrm{down}}$. We assume $s>d/2$, and have that
\[
P_{\mathrm{up}}:H^s(\Omega;\R^{c_{\mathrm{in}}})\to H^s(\Omega;\R^{c_0}),
\qquad
P_{\mathrm{down}}:H^s(\Omega;\R^{c_L})\to H^s(\Omega;\R^{c_{\mathrm{out}}})
\], 
are bounded linear lifting and projection maps with biases $b_{\mathrm{up}}\in H^s(\Omega;\R^{c_0})$ and $b_{\mathrm{down}}\in H^s(\Omega;\R^{c_{\mathrm{out}}})$. Each $A_\ell = W_\ell + K_\ell$ has $W_\ell:H^s(\Omega;\R^{c_\ell})\to H^s(\Omega;\R^{c_{\ell+1}})$ bounded linear and $K_\ell$ a finite-mode Fourier multiplier defined by $\widehat{K_\ell v}(k)=R_{\ell,k}\widehat v(k)$ for $|k|_\infty\le m_\ell$ and zero otherwise, with cutoff $m_\ell\in\mathbb{N}$ and learned $R_{\ell,k}\in\R^{c_{\ell+1}\times c_\ell}$. Layer biases satisfy $b_\ell\in H^s(\Omega;\R^{c_{\ell+1}})$, and $\sigma:\R\to\R$ acts componentwise.

\section{Proofs}\label{app:proofs}

\subsection{The operator game is in pNA}

\SmoothGamePNA*
\begin{proof}
\label[appendix]{prf:smooth_game_pna}
Define $f_x:[0,1]^m\to\R$ by $f_x(a):=G_x\!\bigl(u_0+\sum_{i=1}^m a_i\,\eta_i h\bigr)-G_x(u_0)$ and $\mu_i(S):=\int_S \eta_i\,d\lam/\!\int_\Omega \eta_i\,d\lam$. Then $v_x = f_x\circ(\mu_1,\dots,\mu_m)$ by construction.

\emph{The $\mu_i$ are non-atomic probability measures.} Each $\mu_i$ is a probability measure on $(\Omega,\mathcal C)$ since $\eta_i\geq 0$ and $\mu_i(\Omega)=1$, and is absolutely continuous w.r.t.\ Lebesgue measure with density $\eta_i/\!\int_\Omega \eta_i\,d\lam$, hence non-atomic since Lebesgue measure on $\Omega=\mathbb T^d$ is non-atomic.

\emph{$f_x\in C^1([0,1]^m)$ with $f_x(0)=0$.} The map $\Lambda:\R^m\to X$, $a\mapsto u_0+\sum_i a_i\eta_i h$, is affine with bounded derivative $D\Lambda[a]:b\mapsto \sum_i b_i\eta_i h$ (the $\eta_i h$ lie in $X$ since multiplication by $\eta_i\in C^\infty(\Omega)$ preserves $H^s$). Since $G_x$ is Fr\'echet $C^1$, the chain rule gives $f_x\in C^1([0,1]^m)$ with $\partial_i f_x(a)=DG_x(\Lambda(a))[\eta_i h]$. Setting $a=0$ yields $f_x(0)=G_x(u_0)-G_x(u_0)=0$.

The factorization $v_x=f_x\circ(\mu_1,\dots,\mu_m)$ therefore matches the canonical $pNA$ representation~\cite[after Def.~1]{neymanChapter56Values2002}, so $v_x\in pNA$. Existence, uniqueness, and the diagonal-formula expression of the Aumann--Shapley value follow from~\cite[Thm.~2]{neymanChapter56Values2002}.
\end{proof}

\subsection{Smooth FNOs are Fr\'echet \texorpdfstring{$C^1$}{C\textasciicircum 1}}
\label[appendix]{app:fno_frechet}

\SmoothFNOFrechet*

\begin{proof}

\label{prf:fno_frechet}
We proceed in four steps.

\smallskip
\noindent\textbf{Step 1: every linear component is bounded on the relevant Sobolev spaces.}
We first verify that all affine-linear pieces of the architecture act continuously on
the Sobolev spaces under consideration.

\emph{(a) Lifting and projection.}
By assumption,
\[
P_{\mathrm{up}}:H^s(\Omega;\R^{c_{\mathrm{in}}})\to H^s(\Omega;\R^{c_0}),
\qquad
P_{\mathrm{down}}:H^s(\Omega;\R^{c_L})\to H^s(\Omega;\R^{c_{\mathrm{out}}})
\]
are bounded linear maps.
Hence
\[
u\mapsto P_{\mathrm{up}}u+b_{\mathrm{up}}
\quad\text{and}\quad
v\mapsto P_{\mathrm{down}}v+b_{\mathrm{down}}
\]
are affine continuous maps between Banach spaces.

\emph{(b) Channel-mixing operators.}
For each layer $\ell$, the operator
\[
W_\ell:H^s(\Omega;\R^{c_\ell})\to H^s(\Omega;\R^{c_{\ell+1}})
\]
is assumed bounded linear.

\emph{(c) Truncated Fourier multipliers.}
Fix $\ell$ and $v\in H^s(\Omega;\R^{c_\ell})$.
On the torus and since $K_\ell$ has only finitely many active modes,
\[
\|K_\ell v\|_{H^s}^2
=
\sum_{k \in \mathbb{Z}^d}
(1+|k|^2)^s |\widehat {K_\ell v}(k)|^2
=
\sum_{|k|_\infty\le m_\ell}
(1+|k|^2)^s |R_{\ell,k}\widehat v(k)|^2.
\]
Using the operator norm of each matrix $R_{\ell,k}$,
\[
|R_{\ell,k}\widehat v(k)|^2
\le
\|R_{\ell,k}\|_{\mathrm{op}}^2\,|\widehat v(k)|^2,
\]
so
\[
\|K_\ell v\|_{H^s}^2
\le
\Bigl(\sup_{|k|_\infty\le m_\ell}\|R_{\ell,k}\|_{\mathrm{op}}^2\Bigr)
\sum_{|k|_\infty\le m_\ell}(1+|k|^2)^s |\widehat v(k)|^2
\le
C_\ell^2 \|v\|_{H^s}^2,
\]
where
\[
C_\ell:=\sup_{|k|_\infty\le m_\ell}\|R_{\ell,k}\|_{\mathrm{op}} < \infty.
\]
Thus $K_\ell$ is bounded linear on
$H^s(\Omega;\R^{c_\ell})\to H^s(\Omega;\R^{c_{\ell+1}})$.

\emph{(d) Layerwise affine map.}
Since both $W_\ell$ and $K_\ell$ are bounded linear, so is
\[
A_\ell=W_\ell+K_\ell.
\]
Therefore
\[
v\mapsto A_\ell v+b_\ell
\]
is an affine continuous map
\[
H^s(\Omega;\R^{c_\ell})\to H^s(\Omega;\R^{c_{\ell+1}}).
\]

In particular, every linear or affine block appearing in the architecture is well-defined
and continuous on the corresponding Sobolev space.

\smallskip
\noindent\textbf{Step 2: the activation acts smoothly on $H^s$.}
Because $s>d/2$, $H^s(\Omega)$ is a Banach algebra under pointwise multiplication and embeds continuously into $L^\infty(\Omega)$ (Runst--Sickel~\cite{runstSobolevSpacesFractional1996}, Thm.~4.6.4/1, applied with $p=q=2$ so that
$F^s_{2,2}=H^s$ and $s>d/2=d/p$).
The Nemytskij theorem for smooth nonlinearities
(\cite{runstSobolevSpacesFractional1996}, Thm.~5.5.3/2) implies that for
$\sigma\in C^\infty(\R)$ with $\sigma(0)=0$, the superposition map
\[
S_\sigma:H^s(\Omega;\R^m)\to H^s(\Omega;\R^m),
\qquad
S_\sigma(w)=\sigma\circ w,
\]
is Fr\'echet $C^1$ for every finite $m$, with derivative
\[
DS_\sigma(w)[z]=\sigma'(w)\,z
\]
componentwise.
Since $H^s$ is a Banach algebra and $\sigma'(w)\in H^s\cap L^\infty$,
the map $z\mapsto \sigma'(w)z$ is bounded linear on $H^s$.

\smallskip
\noindent\textbf{Step 3: each hidden layer map is Fr\'echet $C^1$.}
Define
\[
T_\ell(v):=S_\sigma(A_\ell v+b_\ell),
\qquad \ell=0,\dots,L-1.
\]
By Step 1, the map $v\mapsto A_\ell v+b_\ell$ is affine continuous.
By Step 2, $S_\sigma$ is Fr\'echet $C^1$.
Hence the chain rule yields that each $T_\ell$ is Fr\'echet $C^1$ as a map
\[
H^s(\Omega;\R^{c_\ell})\to H^s(\Omega;\R^{c_{\ell+1}}).
\]

Similarly, the lifting map
\[
T_{\mathrm{up}}(u):=P_{\mathrm{up}}u+b_{\mathrm{up}}
\]
and the readout map
\[
T_{\mathrm{down}}(v):=P_{\mathrm{down}}v+b_{\mathrm{down}}
\]
are affine continuous, hence Fr\'echet $C^1$.

\smallskip
\noindent\textbf{Step 4: finite composition preserves $C^1$.}
The full operator can be written as
\[
\mathcal F
=
T_{\mathrm{down}}
\circ
T_{L-1}
\circ
\cdots
\circ
T_0
\circ
T_{\mathrm{up}}.
\]
Since each factor is Fr\'echet $C^1$, repeated application of the chain rule shows that
\[
\mathcal F:H^s(\Omega;\R^{c_{\mathrm{in}}})\to H^s(\Omega;\R^{c_{\mathrm{out}}})
\]
is Fr\'echet $C^1$.
\end{proof}

\begin{remark}
The theorem covers standard smooth activations such as $\tanh$ 
and GELU.
It does not apply verbatim to ReLU, which is not $C^1$;
see the discussion at the end of \cref{sec:experiments}.
\end{remark}

\subsection{Partition of unity for function-valued Integrated Gradient}

\PartitionUnityAttr*
\begin{proof}
\label[appendix]{prf:partitionOfUnity}
Set $h=u-u_0$. Since $G_x$ is Fr\'echet $C^1$, the chain rule gives $\tfrac{d}{dt}G_x(u_0+t h) = DG_x(u_0+t h)[h]$, which is continuous on $[0,1]$; the fundamental theorem of calculus then yields
\[
\mathcal F(u)(x)-\mathcal F(u_0)(x) = G_x(u)-G_x(u_0) =
\int_0^1 DG_x(u_0+t h)[h]\,dt.
\]
Since $\sum_j \eta_j=1$, $h=\sum_j \eta_j h$ in $X$.
Linearity of $DG_x(u_0+t h)$ in its direction argument gives
$DG_x(u_0+t h)[h] = \sum_j DG_x(u_0+t h)[\eta_j h]$.
Summing under the integral,
\[
\mathcal F(u)(x)-\mathcal F(u_0)(x)
=
\sum_{j=1}^m \int_0^1 DG_x(u_0+t h)[\eta_j h]\,dt
=
\sum_{j=1}^m \operatorname{Attr}_x^{\eta_j}(u_0,u). \qedhere
\]
\end{proof}


\section{Experimental setup}
\label[appendix]{app:experimental_setup}

\subsection{Datasets}
\label[appendix]{app:datasets}

We use nine PDE datasets across 1D and 2D, plus two unstructured-mesh
datasets from the MeshGraphNets benchmark. The 1D PDEs all use a native
resolution of $1024$ points on $[0,1]$, the structured 2D PDEs use
$128\times 128$ points on a unit square (or $[0,2.5]^2$ for Gray--Scott;
see below), and every structured dataset is sampled at the training
resolutions $\{16, 32, 64, 128, 256, 512, 1024\}$ by stride-downsampling from native
resolution (for 2D data, we used $\{16, 64, 256, 1024\}$ players). Each dataset has $5\,000$ generated samples split into 4220 train, 750 val, and 30 test samples.

\paragraph{Linear PDEs (FFT-analytic solver).}
The advection equation $\partial_t u + \beta\partial_x u = 0$ on a periodic
domain ($\beta=1.0$), the heat equation $\partial_t u = \kappa\partial_x^2 u$ in
1D ($\kappa=0.01$), and its 2D analog
$\partial_t u = \kappa(\partial_x^2 + \partial_y^2)u$ ($\kappa=0.01$) admit
closed-form transfer functions in Fourier space, so the solver applies
the analytic propagator at each requested time.

\paragraph{Nonlinear PDEs (pseudo-spectral method of lines).}
Burgers $\partial_t u + u\partial_x u = \nu\partial_x^2 u$ ($\nu=0.01$) and
Allen--Cahn $\partial_t u = \varepsilon^2\partial_x^2 u + u - u^3$
($\varepsilon=0.1$) are integrated with FFT-based spatial derivatives, $2/3$
dealiasing for the nonlinearity, and an adaptive time integrator
(\texttt{RK45} for Burgers, the stiff \texttt{Radau} for Allen--Cahn). The 2D
Gray--Scott reaction--diffusion system,
\(\partial_t u = D_u\Delta u - uv^2 + F(1-u),\;
  \partial_t v = D_v\Delta v + uv^2 - (F+k)v\)
with $D_u=2\!\cdot\!10^{-5}$, $D_v=10^{-5}$, $F=0.04$, $k=0.06$ on
$[0,2.5]^2$, is integrated the same way (\texttt{RK45}) but for a long
time horizon $T=5000$ in order to reach the pattern-forming regime.
Gray--Scott has two channels in input and output ($u$ and $v$); the
other three are scalar.

\paragraph{Finite-difference / finite-volume PDEs.}
Diffusion--sorption,
$\partial_t u = D\partial_x^2 u / R(u)$ with Freundlich retardation
$R(u)=1+(\rho_s/\phi)k_f n_f u^{n_f-1}$, lives on a non-periodic
domain with Dirichlet inflow on the left and zero-flux on the right; we
use centered finite differences and \texttt{Radau} time stepping
($D=5\!\cdot\!10^{-4}$, $k_f=3.5$, $n_f=0.4$, $\rho_s=1.0$, $\phi=0.29$,
$c_{\mathrm{left}}=1.0$). The 2D shallow-water system
$\partial_t (h, hu, hv) + \nabla\cdot F(h, hu, hv) = 0$ with $g=9.81$ uses
a first-order finite-volume scheme with HLL Riemann fluxes, an SSP-RK3
time integrator at $\mathrm{CFL}=0.4$, and reflective boundaries; we
predict only the height field $h$ as the backbone target. Darcy flow
$-\nabla\cdot(a(x)\nabla u) = f$ is the only static problem in the set:
the solver assembles a sparse stencil with harmonic averaging at cell
faces and inverts it directly with \texttt{scipy.sparse.linalg.spsolve},
with permeability $a$ drawn from a log-normal random field with power
spectrum $|k|^{-4}$ and forcing $f=1$.

\paragraph{MeshGraphNets benchmarks.}
We additionally use the airfoil and cylinder-flow datasets from the
MeshGraphNets release~\cite{pfaffLearningMeshBased2021}. The airfoil
dataset is a transonic Euler flow around a NACA airfoil with
$5\,233$ unstructured mesh nodes per sample (the node count is fixed across samples); we predict
the Mach number. The cylinder-flow dataset is an incompressible flow
past a cylinder with a genuinely heterogeneous mesh -- the node count
varies sample-to-sample because of adaptive refinement -- and we
predict the velocity field. Cylinder-flow is the dataset that motivates
a grid-agnostic explainer at all: a fixed-grid amortized explainer
cannot apply across samples that do not share a common discretization.
We inherit the train/val/test split from the original release.

\paragraph{Multi-resolution sampling.}
At training time, each sample is deterministically pinned to one of the
possible pre-selected resolutions via the \texttt{FixedResolutionBatchSampler}: the
resolution-to-sample assignment is fixed by index modulo the resolution
list, so a given sample is always served at the same resolution across
epochs. The mesh-based datasets do not use multi-resolution sampling, but assigns each node one of 64 groups to reduce the number of players.

\begin{table}[h]
\centering
\caption{The PDE datasets. All structured-grid datasets have native
spatial resolution $1024$ (1D) or $128\times 128$ (2D), $5\,000$
generated samples, split into 4220 train, 750 val, and 30 test samples. ``$t_{\mathrm{idx}}$''
is the snapshot index used as the backbone target on a uniform grid of
$n_{\mathrm{snap}}$ snapshots over $[0, T]$; ``static'' marks the only
time-independent PDE. ``IC family'' names the random initial-condition
generator.}
\label{tab:datasets}
\resizebox{\textwidth}{!}{%
\small
\begin{tabular}{@{}lllrrl@{}}
\toprule
\textbf{PDE} & \textbf{Equation / parameters} & \textbf{IC family} & $T$ & $n_{\mathrm{snap}}$ & $t_{\mathrm{idx}}$ \\
\midrule
Advection 1D & $\partial_t u + \beta\partial_x u = 0$, $\beta=1$ & mixed-1D & $1.0$ & $101$ & $30$ \\
Heat 1D & $\partial_t u = \kappa\partial_x^2 u$, $\kappa=0.01$ & mixed-1D & $1.0$ & $101$ & $50$ \\
Burgers 1D & $\partial_t u + u\partial_x u = \nu\partial_x^2 u$, $\nu=0.01$ & mixed-1D & $1.0$ & $101$ & $50$ \\
Allen--Cahn 1D & $\partial_t u = \varepsilon^2\partial_x^2 u + u - u^3$, $\varepsilon=0.1$ & tanh-mixture & $0.5$ & $101$ & $50$ \\
Diff.--sorp. 1D & Freundlich retardation; $D=5\!\cdot\!10^{-4}$ & boundary-driven & $500$ & $101$ & $50$ \\
Heat 2D & $\partial_t u = \kappa\Delta u$, $\kappa=0.01$ & rand.~Fourier (8 modes) & $1.0$ & $51$ & $25$ \\
Reac.--Diff. 2D (Gray--Scott) & $D_u=2\!\cdot\!10^{-5},\,D_v=10^{-5},\,F{=}0.04,\,k{=}0.06$ & seeded patches & $5000$ & $51$ & $25$ \\
Shallow water 2D & $g=9.81$, $\mathrm{CFL}=0.4$ & Gaussian bump & $0.5$ & $51$ & $25$ \\
Darcy 2D & $-\nabla\!\cdot\!(a\nabla u){=}1$, $a\sim$log-normal & log-normal field & static & --- & --- \\
\midrule
MGN airfoil & transonic Euler around NACA & --- & --- & --- & --- \\
MGN cylinder & incompressible cylinder flow & --- & --- & --- & --- \\
\bottomrule
\end{tabular}
}
\end{table}

The 1D ``mixed-1D'' IC family draws with equal probability from random
20-mode Fourier series with power-law decay, single Fourier modes,
$1$--$4$ Gaussian bumps, and low-mode ($1$--$3$) Fourier sums; this is
designed so the backbone sees both smooth multiscale data and localised
features. The Allen--Cahn ICs are random piecewise-tanh profiles with
three transitions and interface width $0.05$. Diffusion--sorption uses smooth left-driven
exponentials $c_{\mathrm{left}}\exp(-x/\lambda)$ with $\lambda$ randomised
around the design value $0.05L$. The 2D ICs are detailed in their
respective generators.

\subsection{Data generation}
\label[appendix]{app:datagen}

All structured-grid datasets are generated by a single script
(\texttt{generate\_data.py}) configured per PDE through Hydra\cite{Yadan2019Hydra}: a top-level
config selects the solver, the IC generator, the spatial grid, and the
time grid (\cref{app:datasets}). Generation proceeds sample-by-sample
and is embarrassingly parallel -- each sample is one IC draw followed by
one solver call -- so we dispatch all $n=5\,000$ samples across CPU
workers via \texttt{joblib} with the \texttt{loky} backend. Reproducibility
is independent of worker count: we seed a single
\texttt{SeedSequence(42)} and \texttt{spawn} $n$ child sequences from it,
one per sample, so a sample with index $i$ gets the same RNG state
regardless of which worker picks it up. After all workers
finish we sort the results by index and stack them into the dataset
arrays.

The 1D outputs are saved as $u_0$ of shape $(n, 1024)$ together with the
solution $u(\cdot, t)$ of shape $(n, n_{\mathrm{snap}}, 1024)$; the 2D
outputs add an extra spatial axis (and a leading channel axis for
Gray--Scott). Darcy is the exception: it is static and stored as
permeability and pressure pairs of shape $(n, 128, 128)$ each.

The MeshGraphNets datasets are not generated by us. We reuse the original
trajectories from~\cite{pfaffLearningMeshBased2021}, repackaged as
per-trajectory \texttt{.npz} files under the same train/val/test split as
the original release.

\subsection{Models}
\label[appendix]{app:models}

\paragraph{Backbones.} For every dataset we use one of five backbones,
each chosen so that its inductive bias matches the data. The 1D PDEs use
a standard 1D Fourier neural operator (\textbf{FNO1D}: hidden width $64$,
$4$ layers, $16$ retained Fourier modes, domain padding $0.5$) and, as a
non-FNO comparison architecture, the multiwavelet-transform operator of
Gupta et al.~\cite{guptaMultiwaveletOperatorLearning2021}
(\textbf{MWT-Gupta-1D}: Legendre basis, channel width $c=16$, wavelet
order $k=4$, $\alpha=10$, $n_{CZ}=2$, $L=0$, with the input $u_0$
concatenated with the spatial coordinate as a second channel). The
structured 2D PDEs use a 2D FNO (\textbf{FNO2D}: hidden $64$, $4$ layers,
$16\times 16$ modes, padding $0.5$) and the local-integral variant of
Liu et al.~\cite{liuNeuralOperatorsLocalized2024}
(\textbf{LocalFNO2D}: same FNO settings plus a DISCO local-convolution
branch sized for $\texttt{default\_in\_shape}=[128, 128]$).

The unstructured-mesh datasets use the geometry-informed neural operator
(\textbf{GINO}) of Li et al.~\cite{liGeometryInformedNeural2023}: GNO
encoder and decoder with neighbor-search radius $0.05$ (in the
normalized $[0,1]^2$ frame) and a nonlinear kernel transform, around a
$16\times 16$-mode FNO at hidden width $64$ on a $64\times 64$ regular
latent grid. For the MGN datasets the only change is the I/O width:
$2$ input channels and $2$ output channels for the velocity field
$(v_x, v_y)$ rather than a scalar Mach number.

\paragraph{Explainers.} Every explainer reuses the architecture of its
backbone -- FNO backbone with FNO explainer, GINO with GINO, and so on
-- with two changes. First, the output channel count is always set to
$1$, since the explainer returns a single scalar attribution density
$\varphi(x)$ regardless of the I/O width of the backbone (the GINO-MGN
explainer therefore drops from $2$ output channels to $1$). Second, on
the LocalFNO2D explainer the DISCO $\texttt{default\_in\_shape}$ is set
to the largest training resolution served by the experiment ($[32, 32]$
in the runs we report) so the local kernels match the receptive field
actually seen during the FastSHAP loop. We do not use further variants
-- no extra capacity, no query-conditioned input channel -- so the
explainer matches the backbone in parameter budget up to the single
channel that is dropped on the output side.

\subsection{Training}

We use ``Weights and Biases'' \cite{biewald_experiment_2020} for our experiment tracking.
\label[appendix]{app:training}

\paragraph{Backbones.} All backbones are trained with the same recipe:
\texttt{AdamW} (learning rate $10^{-3}$, weight decay $10^{-4}$), cosine
learning-rate schedule, MSE loss against the held-out solution at the
target snapshot, $200$ epochs with early stopping on validation MSE
(patience $50$, no minimum-delta requirement), and the best-val-loss
checkpoint kept. The batch size is $64$ for the structured-grid
backbones (FNO1D, FNO2D, LocalFNO2D, MWT-Gupta-1D) and $1$ for the GINO
backbones, where the per-sample mesh forces single-mesh forwards
(\texttt{neuralop.GINO} only batches across samples that share a
geometry).

\paragraph{Explainer training.} The OperatorSHAP loop of
\cref{alg:operatorshap} is trained with \texttt{AdamW}
(learning rate $3\!\cdot\!10^{-4}$, weight decay $10^{-5}$), a
$5$-epoch linear warmup followed by cosine decay, and the FastSHAP
weighted-MSE loss already shown in \cref{alg:operatorshap}. We
run for $300$ epochs with patience $10$ and batch size $64$. At every
step we draw $K=128$ coalitions $S\subseteq P$ from the Shapley kernel
weighting $w(s)\propto 1/(s(n-s))$ in paired form (each $S$ is drawn
together with its complement to reduce gradient variance), evaluate the
backbone on the masked inputs $u_S$ to obtain the targets $v(S)$, and
update the explainer against the FastSHAP weighted MSE. The coalitions
are binary masks at the level of the explainer's output grid
($n_{\mathrm{players}}$ equals the spatial resolution of the current
batch), efficiency is enforced by hard normalization, and the query
point $x^\star$ is fixed at the spatial center of the domain. To save
two backbone forwards per step, $v(\Omega)$ and $v(\emptyset)$ are
cached once per resolution. We use zero baseline for all runs except 
the MGN data, there we use global mean for the cylinder flow and per-node
mean for the airfoil flow.

\paragraph{MGN-specific deltas.} The MGN explainer training uses the
same optimizer settings as the structured-grid case but differs in a
handful of places dictated by the mesh setting. Batch size drops to $1$
(GINO constraint) and we run for $100$ epochs at patience $50$ with no
warmup, since the random-$t_0$ time-window augmentation (each
\texttt{\_\_getitem\_\_} draws a fresh start time) makes the
validation curve noisy enough that an aggressive early-stopping budget
fires prematurely. The coalition partition is a fixed $8\!\times\!8$
grid in the normalized mesh frame ($n=64$ players, independent of mesh
density) and we draw $K=64$ coalitions per step from the Shapley distribution. Coalition forwards are chunked at most $32$ at a time through
GINO to keep the per-coalition activation in GPU memory. Because the
random $t_0$ also changes the input function, $v(\Omega)$ cannot be
cached and is recomputed every batch. The query point $x^\star$ is
specified once in physical coordinates and resolved per sample to the
nearest mesh node.

\paragraph{Multi-resolution sampling.} For all structured-grid
explainers we use the deterministic
\texttt{FixedResolutionBatchSampler}: each training sample is
pre-assigned to one resolution from $\{16, 32, 64, 128, 256, 512, 1024\}$ by index, so
a given sample is always served at the same resolution across epochs. The MGN datasets do not use multi-resolution sampling -- the mesh \emph{is} the resolution.

\subsection{Attribution methods}
\label[appendix]{app:attribution_methods}

\paragraph{Notation and conventions.} For every attribution method
below we partition the input domain $\Omega$ into $n$ cells -- the
\emph{player grid} $P$ -- and define the discrete coalition game
\(
  v(S) = G_{x^\star}\!\bigl(\mathcal F(u_S)\bigr) - G_{x^\star}\!\bigl(\mathcal F(u_{\mathrm{b}})\bigr),
\qquad S\subseteq P,
\)
where $u_S = u_{\mathrm{b}} + \mathbf 1_S\odot(u-u_{\mathrm{b}})$ is the
input with cells in $S$ revealed and the rest held at the baseline
$u_{\mathrm{b}}$. The baseline defaults to zero for the 9 PDEs, to per-node mean for MGN airfoil, and to global mean for MGN cylinder flow. The query point $x^\star$ is the center of the spatial domain
for structured-grid PDEs and a fixed physical location $(0.67, 0.2)$ for airfoil and $(0.4, 0.1)$ for cylinder flow, in the normalized mesh frame for the MGN datasets, resolved per sample
to the nearest mesh node. All attribution runs use
seed~$42$. The OperatorSHAP and \textsc{KernelSHAP} / \textsc{Exact} /
\textsc{RegressionMSR} estimators all see the same coalition game; they
differ only in how they explore the coalition space. \textsc{IG} is a
non-Shapley density baseline that does not see the game at all but
integrates gradients of $\mathcal F$ along the line from
$u_{\mathrm{b}}$ to $u$.

\paragraph{Exact Shapley.} For $n\le 16$ we enumerate all $2^n$
coalitions and apply the closed-form Shapley sum of
Eq.~\eqref{eq:shapley_def} via \texttt{shapiq.ExactComputer}~\cite{fumagalliSHAPIQ2023}.
This gives a deterministic ground-truth attribution
($65{,}536$ coalitions at $n=16$), which we use as the zero-error
reference at the smallest player count in our budget-curve plots.

\paragraph{KernelSHAP.} The Lundberg--Lee weighted-least-squares
estimator~\cite{lundbergUnifiedApproach2017}, run through
\texttt{shapiq.KernelSHAP}~\cite{fumagalliSHAPIQ2023}. We use it as a
baseline at log-spaced budgets
$\{2^7, 2^8, \dots, 2^{15}\}$ to track convergence to the exact
solution (at $n=16$) and to the higher-budget RegressionMSR reference
(at $n\in\{32, 64\}$).

\paragraph{RegressionMSR.} The regression-adjusted Monte Carlo
estimator of Witter et al.~\cite{witterRegressionadjustedMonteCarlo2026},
also run through shapiq~\cite{fumagalliSHAPIQ2023}. RegressionMSR is more modern \textsc{KernelSHAP} and, based on the literature, performs better at higher player numbers than KernelSHAP. It is therefore our default discrete reference with budget of $300{,}000$, where the exact $2^n$ enumeration is infeasible.

\paragraph{OperatorSHAP / FastSHAP.} The amortized explainer of
\cref{sec:method}; the training procedure is documented in
\cref{app:training}. At inference time, a single forward pass of
$\Phi_\theta(u)$ produces the full-resolution density $\varphi$; the
discrete Shapley value over a partition cell $B_i$ is recovered by
integrating $\varphi$ over $B_i$, which on a regular grid reduces to
multiplying the explainer output at each grid point by the cell volume
and summing inside $B_i$. Hard normalization enforces
$\sum_i \hat\varphi_i = v(\Omega) - v(\emptyset)$ exactly.

\paragraph{Integrated gradients (IG).} A non-Shapley
density baseline that bypasses the coalition game entirely. Following
the construction of \cref{sec:masked_attr}, we compute the IG
density via a $50$-step midpoint Riemann sum of the directional
derivative $D G_{x^\star}\!\bigl(u_{\mathrm{b}} + th\bigr)[h]$ along
the straight line $u_{\mathrm{b}}\to u$, with the gradient at each
step obtained by a backward pass through the backbone. Any region attribution is then a cheap weighted sum of this density
against a smooth mask.

\subsection{Evaluation metrics}
\label[appendix]{app:metrics}

For every metric below we reduce along the player or grid dimension to
one scalar per test sample, and report the median and inter quartile ranges (IQR) 
over a held-out set of $30$ test inputs per
PDE. We use four metrics, three discrete and one continuous, each
targeting a different question.

\paragraph{Pearson correlation.} The standard sample Pearson
coefficient between the predicted and reference Shapley vectors at the
reference's partition,
\(
  r(\hat\varphi, \varphi^{\mathrm{ref}})
  = \frac{(\hat\varphi-\bar{\hat\varphi})^\top(\varphi^{\mathrm{ref}}-\bar{\varphi}^{\mathrm{ref}})}
         {\|\hat\varphi-\bar{\hat\varphi}\|\,\|\varphi^{\mathrm{ref}}-\bar{\varphi}^{\mathrm{ref}}\|},
\)
computed per sample along the player dimension. Pearson is invariant to
constant shifts and global rescalings, so it captures the
\emph{shape} of the attribution profile rather than its magnitude.

\paragraph{Normalized RMSE.} The relative squared error along the
player dimension,
\(
  \mathrm{NRMSE}(\hat\varphi, \varphi^{\mathrm{ref}})
  = \|\hat\varphi - \varphi^{\mathrm{ref}}\|_2 \,/\, \|\varphi^{\mathrm{ref}}\|_2,
\)
which complements Pearson by rewarding the correct attribution
\emph{magnitude}.

\paragraph{Faithfulness (weighted $R^2$).} Following the
faithfulness diagnostic of~\citet{DBLP:journals/jmlr/TsaiYR23}, we sample $10,000$ random coalitions
$S\subseteq P$ from the same Shapley-kernel distribution used during
training and report the coefficient of determination of the predicted
linear surrogate $\hat\varphi^\top \mathbf 1_S$ against the true
coalition gap $v(S) - v(\emptyset)$,
\[
R^2 \;=\; 1 - \frac{\sum_S\!\bigl(v(S)-v(\emptyset)-\hat\varphi^\top\mathbf 1_S\bigr)^2}
                    {\sum_S\!\bigl(v(S)-v(\emptyset)-\overline{v(S)-v(\emptyset)}\bigr)^2}.
\]
This measures how well the explainer's linear surrogate generalizes to
coalitions it never saw at training time, and unlike Pearson and NMSE
it does not need a reference attribution.

\section{Additional results}
\label[appendix]{app:full_results}

\cref{tab:cross_pde_metrics_full_top} and \cref{tab:cross_pde_metrics_full_bot} reproduce \cref{tab:cross_pde_metrics_short_median}
for every PDE in our suite, not just the four representative cases shown in the
main text. The same metrics, references, budgets, and reporting convention apply.

\begin{figure}[t]
  \centering
  \includegraphics[width=\textwidth]{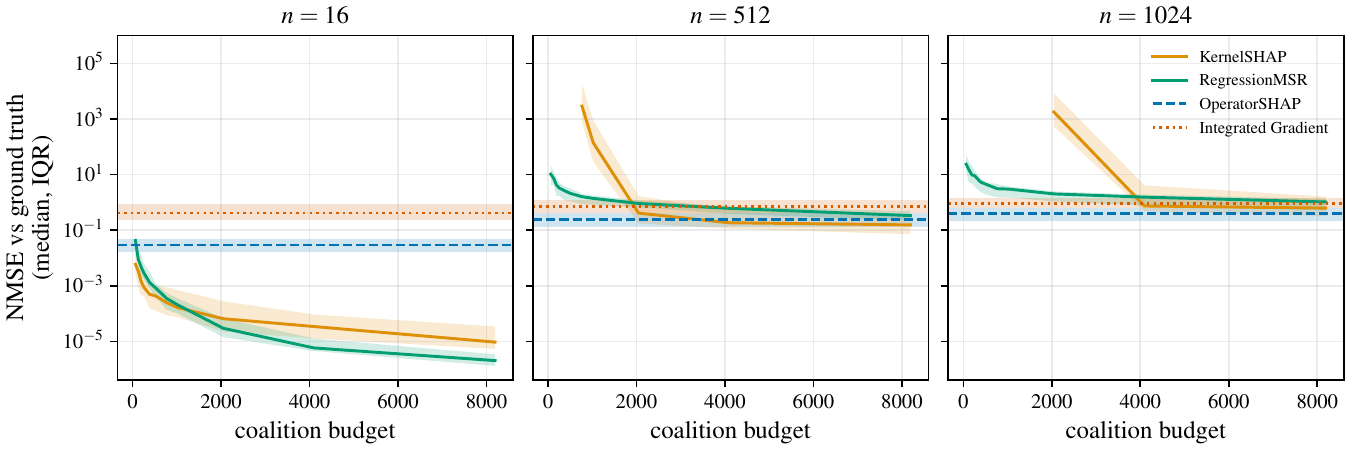}
  \caption{Normalized mean squared error (NMSE) of attribution methods vs.\ ground truth (for $n=16$: Exact Shapley; otherwise: RegressionMSR with 300k budget) for different player numbers (spatial resolution) 16 (\emph{left}), 512 (\emph{middle}), 1024 (\emph{right}). The lines show the median with $0.25$- and $0.75$-quantile error bars. The same OperatorSHAP model was used at all resolutions. We can see that for low player numbers, classical Shapley approximators outperform OperatorSHAP already for low budgets. For higher player numbers, classical methods need drastically higher budgets to outperform OperatorSHAP.}
  \label{fig:burgers1d_budget_curves_error}
\end{figure}

\begin{figure}[t]
  \centering
  \includegraphics[width=\textwidth]{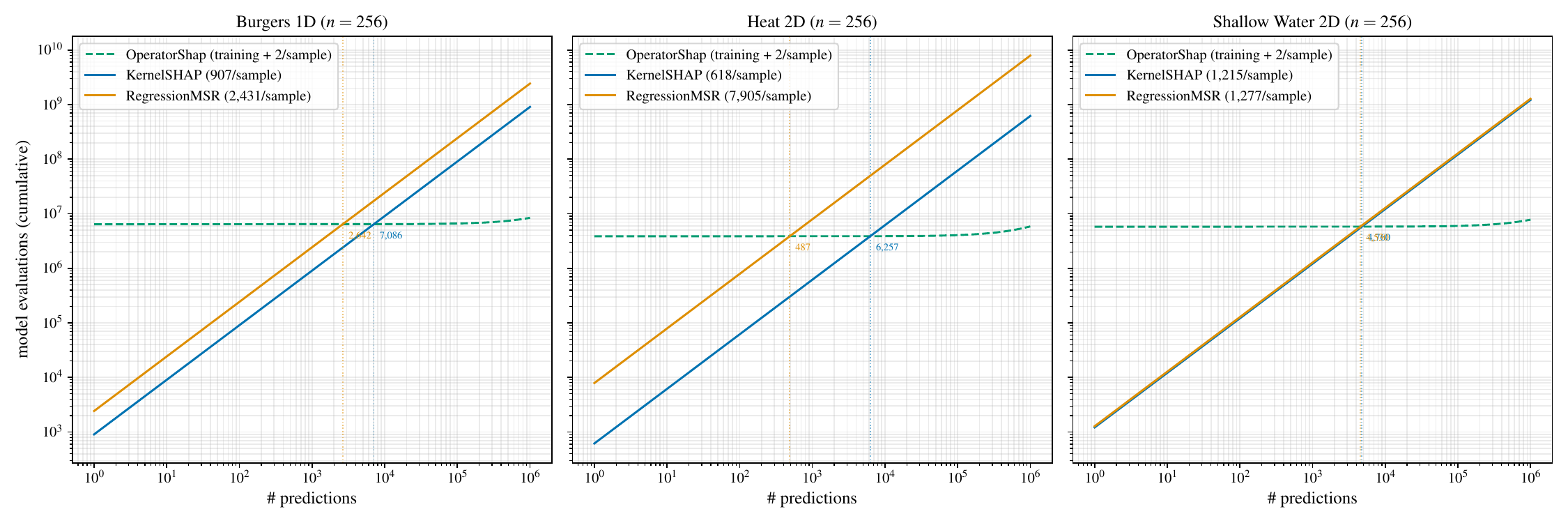}
  \caption{Amortization visualization: We depict the total number of necessary backbone evaluations for inference of increasing sample sizes. OperatorSHAP includes training evaluations, but stays almost constant with increasing sample size, while classical methods grow linearly with the number of samples. We tuned the budget of classical methods to reach a similar error as OperatorSHAP at $n=512$. We can see that numbers depend on the PDE.}
  \label{fig:amortization}
\end{figure}

\begin{table}[h]
  \centering
  \caption{Full-suite version of \cref{tab:cross_pde_metrics_short_median}: Cross-PDE attribution quality across resolutions $n$ on first half of PDEs (other half in \cref{tab:cross_pde_metrics_full_bot}). Each (PDE, $n$, method) entry
    reports the per-sample median over $30$ test inputs ($10$ for MGN) on the top line and
    the inter-quartile range in brackets below. Metrics:
    Pearson correlation and NRMSE against the per-PDE reference attribution,
    and the weighted-$R^2$ Faithfulness score. RegressionMSR
    uses a per-sample budget of $2\cdot\mathrm{n\_players}$ (minimum budget $1024$) backbone evaluations; IG is calculated via a 50-step Riemann sum. Across resolutions OperatorSHAP uses the same model per PDE. Higher Pearson and $R^2$ are better; lower NRMSE
    is better.}
  \label{tab:cross_pde_metrics_full_top}
  \resizebox{\textwidth}{!}{%
    \input{./tables/all_pde_metrics_full_median_top.tex}  }
\end{table}

\begin{table}[h]
  \centering
  \caption{Full-suite version of \cref{tab:cross_pde_metrics_short_median}: Cross-PDE attribution quality across resolutions $n$ on second half of PDEs (other half in \cref{tab:cross_pde_metrics_full_top}). Each (PDE, $n$, method) entry
    reports the per-sample median over $30$ test inputs ($10$ for MGN) on the top line and
    the inter-quartile range in brackets below. Metrics:
    Pearson correlation and NRMSE against the per-PDE reference attribution,
    and the weighted-$R^2$ Faithfulness score. RegressionMSR
    uses a per-sample budget of $2\cdot\mathrm{n\_players}$ (minimum budget $1024$) backbone evaluations; IG is calculated via a 50-step Riemann sum. Across resolutions OperatorSHAP uses the same model per PDE. Higher Pearson and $R^2$ are better; lower NRMSE
    is better.}
  \label{tab:cross_pde_metrics_full_bot}
  \resizebox{\textwidth}{!}{%
    \input{./tables/all_pde_metrics_full_median_bot.tex}  }
\end{table}

\section{Computing resources}
\label[appendix]{app:compute}

The runs reported here were carried out on a mix of academic-cluster GPUs --
H100, A100, and A40 nodes, picked based on availability rather than any
per-experiment requirement -- so the precise wall-clock time depends on which
node a job ended up on (an exception are the inference time experiments, which were run exclusively on an A100 for a fair comparison). The backbone runs take approximately 30-60 minutes per PDE, the OperatorShap runs approximately five hours per PDE, and the SHAP runs approximately 1 hour per run (ground truth runs 2-3 hours). For the MGN data, the backbones ran in under 2 hours, the OperatorSHAP runs approximately 20 hours, and the SHAP ground truth runs approximately 5 hours. The data-generation pipeline
(\cref{app:datagen}) is CPU-only and runs in under an hour per PDE on
a single $96$-core node thanks to the per-sample joblib parallelism. The
classical-SHAP baselines (KernelSHAP, RegressionMSR) are backbone-evaluation
bound; at the budgets reported in \cref{app:attribution_methods} a single
H100 produces all $30$ test-sample attributions for a given (PDE, $n$) within
a few hours. RAM and storage do not exceed 100 GB. The full project required more compute, due to failed experiments, parameter tuning, or experiments that did not make it into the final paper.

\section{Third-party assets and licenses}
\label[appendix]{app:assets}

Most of the code we build on -- PyTorch, Hydra, joblib, scipy, the
\texttt{neural\_operators} package -- is standard scientific Python tooling. The
two third-party assets that warrant explicit credit are the \texttt{shapiq}
library~\cite{fumagalliSHAPIQ2023}, which we use for all classical Shapley
estimators (\textsc{Exact}, \textsc{KernelSHAP}, \textsc{RegressionMSR}) and
which is released under the MIT license, and the MeshGraphNets airfoil and
cylinder-flow datasets~\cite{pfaffLearningMeshBased2021} produced and released
by Google DeepMind under the Apache~$2.0$ license, which we use unmodified
beyond the per-trajectory NPZ repackaging described in
\cref{app:datagen}. We use both within the bounds of their respective
licenses and redistribute neither.


\section{Use of large language models}
\label[appendix]{app:llm_usage}

Large language models were used throughout the writing and engineering of this
work: as a writing aid for editing and rephrasing parts of the manuscript, as
a coding partner during implementation of the data-generation, training, and
evaluation pipelines, and to discuss the mathematical arguments and the
experimental design. Every theoretical statement, derivation, and final
phrasing was checked by the authors, and every numerical result reported here
was produced by the released code.


\newpage

\end{document}

%% file: tables/all_pde_metrics_short_median.tex
\small
\begin{tabular}{llccccccccc}
\toprule
 &  & \multicolumn{3}{c}{R$^2$ $\uparrow$} & \multicolumn{3}{c}{Pearson $\uparrow$} & \multicolumn{3}{c}{NRMSE $\downarrow$} \\
PDE & $n$ & RMSR & IG & OS (ours) & RMSR & IG & OS (ours) & RMSR & IG & OS (ours) \\
\midrule
\multirow{12}{*}{Burgers 1D} & 32 & \textbf{0.892} & -0.053 & 0.882 & \textbf{0.998} & 0.759 & 0.990 & \textbf{0.080} & 0.811 & 0.138 \\
 &  & \tiny{[0.780, 0.976]} & \tiny{[-2.093, 0.706]} & \tiny{[0.738, 0.951]} & \tiny{[0.997, 0.999]} & \tiny{[0.539, 0.947]} & \tiny{[0.985, 0.995]} & \tiny{[0.050, 0.099]} & \tiny{[0.461, 1.291]} & \tiny{[0.110, 0.192]} \\
 & 64 & 0.852 & -0.031 & \textbf{0.874} & 0.980 & 0.757 & \textbf{0.986} & 0.247 & 0.810 & \textbf{0.188} \\
 &  & \tiny{[0.721, 0.925]} & \tiny{[-2.656, 0.721]} & \tiny{[0.676, 0.955]} & \tiny{[0.967, 0.985]} & \tiny{[0.541, 0.946]} & \tiny{[0.977, 0.990]} & \tiny{[0.203, 0.310]} & \tiny{[0.467, 1.289]} & \tiny{[0.135, 0.268]} \\
 & 128 & 0.748 & -0.080 & \textbf{0.866} & 0.904 & 0.755 & \textbf{0.979} & 0.474 & 0.807 & \textbf{0.218} \\
 &  & \tiny{[0.606, 0.902]} & \tiny{[-3.425, 0.693]} & \tiny{[0.578, 0.954]} & \tiny{[0.848, 0.933]} & \tiny{[0.541, 0.943]} & \tiny{[0.971, 0.986]} & \tiny{[0.419, 0.554]} & \tiny{[0.469, 1.281]} & \tiny{[0.167, 0.306]} \\
 & 256 & 0.682 & -0.064 & \textbf{0.872} & 0.609 & 0.748 & \textbf{0.970} & 0.798 & 0.813 & \textbf{0.253} \\
 &  & \tiny{[0.417, 0.898]} & \tiny{[-3.741, 0.732]} & \tiny{[0.539, 0.954]} & \tiny{[0.510, 0.803]} & \tiny{[0.527, 0.934]} & \tiny{[0.959, 0.979]} & \tiny{[0.683, 0.886]} & \tiny{[0.476, 1.273]} & \tiny{[0.202, 0.334]} \\
 & 512 & 0.662 & -0.127 & \textbf{0.865} & 0.274 & 0.721 & \textbf{0.955} & 1.157 & 0.826 & \textbf{0.347} \\
 &  & \tiny{[0.312, 0.902]} & \tiny{[-4.362, 0.759]} & \tiny{[0.503, 0.958]} & \tiny{[0.151, 0.515]} & \tiny{[0.499, 0.912]} & \tiny{[0.927, 0.966]} & \tiny{[0.855, 1.465]} & \tiny{[0.490, 1.303]} & \tiny{[0.257, 0.500]} \\
 & 1024 & 0.717 & -0.081 & \textbf{0.874} & 0.212 & 0.664 & \textbf{0.907} & 1.386 & 0.853 & \textbf{0.472} \\
 &  & \tiny{[0.301, 0.928]} & \tiny{[-3.947, 0.796]} & \tiny{[0.520, 0.961]} & \tiny{[0.126, 0.460]} & \tiny{[0.469, 0.872]} & \tiny{[0.863, 0.920]} & \tiny{[0.889, 1.619]} & \tiny{[0.691, 1.438]} & \tiny{[0.375, 0.769]} \\
\cmidrule(l){1-11}
\multirow{6}{*}{Heat 2D} & 64 & 0.971 & 0.785 & \textbf{1.000} & 0.967 & 0.714 & \textbf{1.000} & 0.293 & 0.717 & \textbf{0.027} \\
 &  & \tiny{[0.915, 0.987]} & \tiny{[0.519, 0.859]} & \tiny{[0.999, 1.000]} & \tiny{[0.959, 0.971]} & \tiny{[0.690, 0.767]} & \tiny{[0.999, 1.000]} & \tiny{[0.272, 0.339]} & \tiny{[0.641, 0.763]} & \tiny{[0.022, 0.033]} \\
 & 256 & 0.950 & 0.884 & \textbf{1.000} & 0.711 & 0.725 & \textbf{0.993} & 0.697 & 0.730 & \textbf{0.123} \\
 &  & \tiny{[0.788, 0.964]} & \tiny{[0.686, 0.921]} & \tiny{[0.999, 1.000]} & \tiny{[0.587, 0.813]} & \tiny{[0.698, 0.759]} & \tiny{[0.991, 0.995]} & \tiny{[0.638, 0.848]} & \tiny{[0.640, 0.771]} & \tiny{[0.095, 0.182]} \\
 & 1024 & 0.951 & 0.905 & \textbf{0.999} & 0.214 & 0.696 & \textbf{0.910} & 1.226 & 0.791 & \textbf{0.422} \\
 &  & \tiny{[0.874, 0.971]} & \tiny{[0.795, 0.945]} & \tiny{[0.996, 0.999]} & \tiny{[0.139, 0.415]} & \tiny{[0.659, 0.746]} & \tiny{[0.893, 0.930]} & \tiny{[0.927, 1.464]} & \tiny{[0.669, 0.882]} & \tiny{[0.385, 0.575]} \\
\cmidrule(l){1-11}
\multirow{6}{*}{Shallow Water 2D} & 64 & 0.522 & 0.089 & \textbf{0.597} & 0.958 & 0.331 & \textbf{0.993} & 0.330 & 0.991 & \textbf{0.127} \\
 &  & \tiny{[0.491, 0.606]} & \tiny{[0.025, 0.139]} & \tiny{[0.564, 0.650]} & \tiny{[0.948, 0.964]} & \tiny{[0.261, 0.431]} & \tiny{[0.989, 0.995]} & \tiny{[0.312, 0.378]} & \tiny{[0.904, 1.042]} & \tiny{[0.116, 0.157]} \\
 & 256 & 0.736 & 0.175 & \textbf{0.799} & 0.902 & 0.592 & \textbf{0.963} & 0.444 & 0.843 & \textbf{0.309} \\
 &  & \tiny{[0.699, 0.781]} & \tiny{[-0.019, 0.244]} & \tiny{[0.757, 0.838]} & \tiny{[0.894, 0.921]} & \tiny{[0.489, 0.717]} & \tiny{[0.949, 0.972]} & \tiny{[0.407, 0.460]} & \tiny{[0.695, 0.946]} & \tiny{[0.266, 0.356]} \\
 & 1024 & \textbf{0.885} & 0.275 & 0.867 & \textbf{0.914} & 0.799 & 0.692 & \textbf{0.407} & 0.641 & 0.728 \\
 &  & \tiny{[0.842, 0.901]} & \tiny{[0.072, 0.472]} & \tiny{[0.821, 0.899]} & \tiny{[0.902, 0.927]} & \tiny{[0.722, 0.830]} & \tiny{[0.676, 0.726]} & \tiny{[0.373, 0.435]} & \tiny{[0.567, 0.764]} & \tiny{[0.708, 0.758]} \\
\cmidrule(l){1-11}
\multirow{2}{*}{MGN Cylinder} & 64 & -0.409 & \textbf{0.702} & 0.134 & \textbf{0.995} & 0.258 & 0.915 & \textbf{0.104} & 1.225 & 0.451 \\
 &  & \tiny{[-3.934, 0.136]} & \tiny{[0.630, 0.856]} & \tiny{[-0.775, 0.408]} & \tiny{[0.990, 0.996]} & \tiny{[0.171, 0.363]} & \tiny{[0.850, 0.952]} & \tiny{[0.083, 0.129]} & \tiny{[1.024, 1.322]} & \tiny{[0.370, 0.587]} \\
\bottomrule
\end{tabular}

%% file: tables/all_pde_metrics_full_median_top.tex
\small
\begin{tabular}{llccccccccc}
\toprule
 &  & \multicolumn{3}{c}{R$^2$ $\uparrow$} & \multicolumn{3}{c}{Pearson $\uparrow$} & \multicolumn{3}{c}{NRMSE $\downarrow$} \\
PDE & $n$ & RMSR & IG & OS (ours) & RMSR & IG & OS (ours) & RMSR & IG & OS (ours) \\
\midrule
\multirow{12}{*}{Heat 1D} & 32 & 0.998 & \textbf{0.999} & 0.998 & 0.998 & \textbf{1.000} & 0.998 & 0.072 & \textbf{0.028} & 0.058 \\
 &  & \tiny{[0.992, 0.999]} & \tiny{[0.997, 0.999]} & \tiny{[0.997, 0.999]} & \tiny{[0.996, 0.998]} & \tiny{[0.999, 1.000]} & \tiny{[0.998, 0.999]} & \tiny{[0.063, 0.104]} & \tiny{[0.020, 0.045]} & \tiny{[0.046, 0.068]} \\
 & 64 & 0.989 & 0.998 & \textbf{0.999} & 0.977 & \textbf{1.000} & 0.999 & 0.255 & \textbf{0.029} & 0.050 \\
 &  & \tiny{[0.955, 0.992]} & \tiny{[0.997, 0.999]} & \tiny{[0.998, 1.000]} & \tiny{[0.968, 0.982]} & \tiny{[0.998, 1.000]} & \tiny{[0.998, 0.999]} & \tiny{[0.218, 0.300]} & \tiny{[0.020, 0.059]} & \tiny{[0.039, 0.062]} \\
 & 128 & 0.978 & 0.998 & \textbf{0.999} & 0.885 & \textbf{0.999} & 0.997 & 0.499 & \textbf{0.038} & 0.082 \\
 &  & \tiny{[0.926, 0.982]} & \tiny{[0.996, 1.000]} & \tiny{[0.996, 1.000]} & \tiny{[0.808, 0.923]} & \tiny{[0.997, 1.000]} & \tiny{[0.995, 0.998]} & \tiny{[0.437, 0.532]} & \tiny{[0.024, 0.081]} & \tiny{[0.062, 0.095]} \\
 & 256 & 0.957 & 0.998 & \textbf{0.999} & 0.541 & \textbf{0.997} & 0.993 & 0.849 & \textbf{0.105} & 0.121 \\
 &  & \tiny{[0.915, 0.970]} & \tiny{[0.996, 1.000]} & \tiny{[0.994, 1.000]} & \tiny{[0.437, 0.710]} & \tiny{[0.994, 0.998]} & \tiny{[0.988, 0.995]} & \tiny{[0.686, 0.937]} & \tiny{[0.056, 0.126]} & \tiny{[0.097, 0.154]} \\
 & 512 & 0.950 & 0.998 & \textbf{0.999} & 0.219 & \textbf{0.985} & 0.981 & 1.328 & \textbf{0.199} & 0.241 \\
 &  & \tiny{[0.908, 0.963]} & \tiny{[0.996, 1.000]} & \tiny{[0.992, 1.000]} & \tiny{[0.170, 0.376]} & \tiny{[0.980, 0.990]} & \tiny{[0.969, 0.985]} & \tiny{[0.973, 1.457]} & \tiny{[0.135, 0.292]} & \tiny{[0.164, 0.294]} \\
 & 1024 & 0.969 & 0.998 & \textbf{0.999} & 0.154 & \textbf{0.949} & 0.939 & 1.466 & \textbf{0.360} & 0.402 \\
 &  & \tiny{[0.941, 0.973]} & \tiny{[0.995, 1.000]} & \tiny{[0.992, 1.000]} & \tiny{[0.095, 0.263]} & \tiny{[0.930, 0.962]} & \tiny{[0.900, 0.952]} & \tiny{[1.163, 1.690]} & \tiny{[0.279, 0.492]} & \tiny{[0.297, 0.563]} \\
\cmidrule(l){1-11}
\multirow{12}{*}{Burgers 1D} & 32 & \textbf{0.892} & -0.053 & 0.882 & \textbf{0.998} & 0.759 & 0.990 & \textbf{0.080} & 0.811 & 0.138 \\
 &  & \tiny{[0.780, 0.976]} & \tiny{[-2.093, 0.706]} & \tiny{[0.738, 0.951]} & \tiny{[0.997, 0.999]} & \tiny{[0.539, 0.947]} & \tiny{[0.985, 0.995]} & \tiny{[0.050, 0.099]} & \tiny{[0.461, 1.291]} & \tiny{[0.110, 0.192]} \\
 & 64 & 0.852 & -0.031 & \textbf{0.874} & 0.980 & 0.757 & \textbf{0.986} & 0.247 & 0.810 & \textbf{0.188} \\
 &  & \tiny{[0.721, 0.925]} & \tiny{[-2.656, 0.721]} & \tiny{[0.676, 0.955]} & \tiny{[0.967, 0.985]} & \tiny{[0.541, 0.946]} & \tiny{[0.977, 0.990]} & \tiny{[0.203, 0.310]} & \tiny{[0.467, 1.289]} & \tiny{[0.135, 0.268]} \\
 & 128 & 0.748 & -0.080 & \textbf{0.866} & 0.904 & 0.755 & \textbf{0.979} & 0.474 & 0.807 & \textbf{0.218} \\
 &  & \tiny{[0.606, 0.902]} & \tiny{[-3.425, 0.693]} & \tiny{[0.578, 0.954]} & \tiny{[0.848, 0.933]} & \tiny{[0.541, 0.943]} & \tiny{[0.971, 0.986]} & \tiny{[0.419, 0.554]} & \tiny{[0.469, 1.281]} & \tiny{[0.167, 0.306]} \\
 & 256 & 0.682 & -0.064 & \textbf{0.872} & 0.609 & 0.748 & \textbf{0.970} & 0.798 & 0.813 & \textbf{0.253} \\
 &  & \tiny{[0.417, 0.898]} & \tiny{[-3.741, 0.732]} & \tiny{[0.539, 0.954]} & \tiny{[0.510, 0.803]} & \tiny{[0.527, 0.934]} & \tiny{[0.959, 0.979]} & \tiny{[0.683, 0.886]} & \tiny{[0.476, 1.273]} & \tiny{[0.202, 0.334]} \\
 & 512 & 0.662 & -0.127 & \textbf{0.865} & 0.274 & 0.721 & \textbf{0.955} & 1.157 & 0.826 & \textbf{0.347} \\
 &  & \tiny{[0.312, 0.902]} & \tiny{[-4.362, 0.759]} & \tiny{[0.503, 0.958]} & \tiny{[0.151, 0.515]} & \tiny{[0.499, 0.912]} & \tiny{[0.927, 0.966]} & \tiny{[0.855, 1.465]} & \tiny{[0.490, 1.303]} & \tiny{[0.257, 0.500]} \\
 & 1024 & 0.717 & -0.081 & \textbf{0.874} & 0.212 & 0.664 & \textbf{0.907} & 1.386 & 0.853 & \textbf{0.472} \\
 &  & \tiny{[0.301, 0.928]} & \tiny{[-3.947, 0.796]} & \tiny{[0.520, 0.961]} & \tiny{[0.126, 0.460]} & \tiny{[0.469, 0.872]} & \tiny{[0.863, 0.920]} & \tiny{[0.889, 1.619]} & \tiny{[0.691, 1.438]} & \tiny{[0.375, 0.769]} \\
\cmidrule(l){1-11}
\multirow{12}{*}{Allen--Cahn 1D} & 32 & 0.986 & 0.977 & \textbf{0.987} & 0.999 & 0.998 & \textbf{1.000} & 0.048 & 0.062 & \textbf{0.018} \\
 &  & \tiny{[0.985, 0.992]} & \tiny{[0.968, 0.984]} & \tiny{[0.985, 0.991]} & \tiny{[0.998, 0.999]} & \tiny{[0.997, 0.999]} & \tiny{[1.000, 1.000]} & \tiny{[0.033, 0.053]} & \tiny{[0.049, 0.081]} & \tiny{[0.012, 0.025]} \\
 & 64 & 0.978 & 0.979 & \textbf{0.985} & 0.974 & 0.996 & \textbf{0.999} & 0.209 & 0.087 & \textbf{0.034} \\
 &  & \tiny{[0.976, 0.984]} & \tiny{[0.968, 0.984]} & \tiny{[0.983, 0.989]} & \tiny{[0.961, 0.981]} & \tiny{[0.994, 0.997]} & \tiny{[0.999, 0.999]} & \tiny{[0.178, 0.245]} & \tiny{[0.070, 0.099]} & \tiny{[0.030, 0.041]} \\
 & 128 & 0.969 & 0.980 & \textbf{0.982} & 0.893 & \textbf{0.992} & 0.977 & 0.424 & \textbf{0.121} & 0.199 \\
 &  & \tiny{[0.968, 0.975]} & \tiny{[0.967, 0.984]} & \tiny{[0.979, 0.984]} & \tiny{[0.870, 0.910]} & \tiny{[0.989, 0.994]} & \tiny{[0.970, 0.978]} & \tiny{[0.394, 0.455]} & \tiny{[0.103, 0.138]} & \tiny{[0.190, 0.227]} \\
 & 256 & 0.959 & 0.980 & \textbf{0.983} & 0.705 & \textbf{0.989} & 0.984 & 0.703 & \textbf{0.155} & 0.162 \\
 &  & \tiny{[0.956, 0.964]} & \tiny{[0.967, 0.984]} & \tiny{[0.981, 0.985]} & \tiny{[0.637, 0.743]} & \tiny{[0.982, 0.990]} & \tiny{[0.983, 0.987]} & \tiny{[0.661, 0.776]} & \tiny{[0.146, 0.186]} & \tiny{[0.153, 0.168]} \\
 & 512 & 0.953 & 0.980 & \textbf{0.983} & 0.480 & \textbf{0.985} & 0.980 & 1.034 & 0.197 & \textbf{0.192} \\
 &  & \tiny{[0.948, 0.958]} & \tiny{[0.967, 0.983]} & \tiny{[0.980, 0.985]} & \tiny{[0.403, 0.542]} & \tiny{[0.979, 0.987]} & \tiny{[0.977, 0.986]} & \tiny{[0.950, 1.116]} & \tiny{[0.186, 0.239]} & \tiny{[0.160, 0.203]} \\
 & 1024 & 0.964 & \textbf{0.980} & 0.978 & 0.636 & \textbf{0.975} & 0.747 & 0.963 & \textbf{0.270} & 0.636 \\
 &  & \tiny{[0.960, 0.968]} & \tiny{[0.968, 0.984]} & \tiny{[0.975, 0.981]} & \tiny{[0.526, 0.712]} & \tiny{[0.965, 0.977]} & \tiny{[0.718, 0.781]} & \tiny{[0.869, 1.061]} & \tiny{[0.258, 0.319]} & \tiny{[0.604, 0.668]} \\
\cmidrule(l){1-11}
\multirow{12}{*}{Diff.--Sorp. 1D} & 32 & \textbf{0.974} & -35.020 & 0.962 & \textbf{0.999} & -0.961 & 0.992 & \textbf{0.065} & 5.303 & 0.136 \\
 &  & \tiny{[0.945, 0.987]} & \tiny{[-55.946, -18.863]} & \tiny{[0.953, 0.975]} & \tiny{[0.995, 0.999]} & \tiny{[-0.971, -0.910]} & \tiny{[0.990, 0.993]} & \tiny{[0.058, 0.140]} & \tiny{[4.571, 5.762]} & \tiny{[0.127, 0.148]} \\
 & 64 & 0.965 & -28.114 & \textbf{0.973} & 0.988 & -0.957 & \textbf{0.990} & 0.187 & 5.111 & \textbf{0.139} \\
 &  & \tiny{[0.855, 0.984]} & \tiny{[-51.672, -18.979]} & \tiny{[0.949, 0.983]} & \tiny{[0.972, 0.993]} & \tiny{[-0.968, -0.896]} & \tiny{[0.978, 0.992]} & \tiny{[0.142, 0.357]} & \tiny{[4.464, 5.543]} & \tiny{[0.127, 0.211]} \\
 & 128 & 0.912 & -30.033 & \textbf{0.973} & 0.945 & -0.954 & \textbf{0.985} & 0.405 & 5.056 & \textbf{0.175} \\
 &  & \tiny{[0.701, 0.957]} & \tiny{[-50.087, -22.308]} & \tiny{[0.929, 0.984]} & \tiny{[0.906, 0.967]} & \tiny{[-0.966, -0.884]} & \tiny{[0.964, 0.989]} & \tiny{[0.324, 0.560]} & \tiny{[4.461, 5.447]} & \tiny{[0.154, 0.260]} \\
 & 256 & 0.863 & -43.667 & \textbf{0.974} & 0.835 & -0.950 & \textbf{0.978} & 0.606 & 5.120 & \textbf{0.210} \\
 &  & \tiny{[0.542, 0.943]} & \tiny{[-51.932, -21.287]} & \tiny{[0.904, 0.988]} & \tiny{[0.761, 0.879]} & \tiny{[-0.962, -0.875]} & \tiny{[0.954, 0.985]} & \tiny{[0.496, 0.731]} & \tiny{[4.672, 5.441]} & \tiny{[0.173, 0.295]} \\
 & 512 & 0.885 & -46.271 & \textbf{0.975} & 0.652 & -0.952 & \textbf{0.968} & 0.761 & 5.331 & \textbf{0.252} \\
 &  & \tiny{[0.571, 0.930]} & \tiny{[-82.354, -18.725]} & \tiny{[0.872, 0.990]} & \tiny{[0.585, 0.666]} & \tiny{[-0.964, -0.854]} & \tiny{[0.923, 0.978]} & \tiny{[0.742, 0.808]} & \tiny{[5.095, 5.608]} & \tiny{[0.205, 0.415]} \\
 & 1024 & 0.911 & -52.891 & \textbf{0.977} & 0.477 & -0.944 & \textbf{0.947} & 0.888 & 5.693 & \textbf{0.335} \\
 &  & \tiny{[0.707, 0.941]} & \tiny{[-97.879, -17.171]} & \tiny{[0.833, 0.992]} & \tiny{[0.413, 0.614]} & \tiny{[-0.963, -0.772]} & \tiny{[0.836, 0.962]} & \tiny{[0.808, 0.959]} & \tiny{[5.427, 5.900]} & \tiny{[0.284, 0.658]} \\
\cmidrule(l){1-11}
\bottomrule
\end{tabular}

%% file: tables/all_pde_metrics_full_median_bot.tex
\small
\begin{tabular}{llccccccccc}
\toprule
 &  & \multicolumn{3}{c}{R$^2$ $\uparrow$} & \multicolumn{3}{c}{Pearson $\uparrow$} & \multicolumn{3}{c}{NRMSE $\downarrow$} \\
PDE & $n$ & RMSR & IG & OS (ours) & RMSR & IG & OS (ours) & RMSR & IG & OS (ours) \\
\midrule
\multirow{12}{*}{Advection 1D} & 32 & \textbf{0.867} & -4.378 & 0.831 & \textbf{0.994} & -0.040 & 0.960 & \textbf{0.114} & 1.465 & 0.289 \\
 &  & \tiny{[0.777, 0.899]} & \tiny{[-12.163, -1.162]} & \tiny{[0.530, 0.889]} & \tiny{[0.993, 0.996]} & \tiny{[-0.147, 0.101]} & \tiny{[0.894, 0.989]} & \tiny{[0.092, 0.144]} & \tiny{[1.229, 1.779]} & \tiny{[0.143, 0.664]} \\
 & 64 & \textbf{0.858} & -5.118 & 0.821 & \textbf{0.978} & 0.081 & 0.952 & \textbf{0.242} & 1.370 & 0.316 \\
 &  & \tiny{[0.753, 0.902]} & \tiny{[-13.219, -1.076]} & \tiny{[0.579, 0.892]} & \tiny{[0.963, 0.983]} & \tiny{[-0.076, 0.321]} & \tiny{[0.916, 0.980]} & \tiny{[0.214, 0.287]} & \tiny{[1.141, 1.670]} & \tiny{[0.201, 0.644]} \\
 & 128 & \textbf{0.855} & -5.540 & 0.809 & \textbf{0.953} & 0.205 & 0.928 & \textbf{0.373} & 1.252 & 0.397 \\
 &  & \tiny{[0.709, 0.892]} & \tiny{[-13.713, -1.045]} & \tiny{[0.606, 0.890]} & \tiny{[0.897, 0.969]} & \tiny{[-0.046, 0.485]} & \tiny{[0.899, 0.957]} & \tiny{[0.285, 0.460]} & \tiny{[1.008, 1.644]} & \tiny{[0.329, 0.600]} \\
 & 256 & \textbf{0.853} & -5.082 & 0.766 & \textbf{0.881} & 0.232 & 0.867 & \textbf{0.523} & 1.109 & 0.544 \\
 &  & \tiny{[0.702, 0.898]} & \tiny{[-14.015, -0.897]} & \tiny{[0.619, 0.882]} & \tiny{[0.692, 0.960]} & \tiny{[-0.025, 0.644]} & \tiny{[0.810, 0.902]} & \tiny{[0.306, 0.699]} & \tiny{[0.889, 1.530]} & \tiny{[0.470, 0.626]} \\
 & 512 & \textbf{0.845} & -4.894 & 0.820 & 0.797 & 0.475 & \textbf{0.880} & 0.630 & 1.012 & \textbf{0.519} \\
 &  & \tiny{[0.668, 0.898]} & \tiny{[-14.375, -0.863]} & \tiny{[0.655, 0.892]} & \tiny{[0.543, 0.961]} & \tiny{[0.033, 0.790]} & \tiny{[0.839, 0.929]} & \tiny{[0.277, 0.832]} & \tiny{[0.684, 1.514]} & \tiny{[0.426, 0.646]} \\
 & 1024 & \textbf{0.864} & -4.898 & 0.676 & \textbf{0.878} & 0.594 & 0.667 & \textbf{0.510} & 0.970 & 0.759 \\
 &  & \tiny{[0.693, 0.910]} & \tiny{[-14.518, -0.834]} & \tiny{[0.524, 0.862]} & \tiny{[0.677, 0.980]} & \tiny{[0.055, 0.883]} & \tiny{[0.613, 0.789]} & \tiny{[0.222, 0.763]} & \tiny{[0.597, 1.353]} & \tiny{[0.693, 0.839]} \\
\cmidrule(l){1-11}
\multirow{6}{*}{Heat 2D} & 64 & 0.971 & 0.785 & \textbf{1.000} & 0.967 & 0.714 & \textbf{1.000} & 0.293 & 0.717 & \textbf{0.027} \\
 &  & \tiny{[0.915, 0.987]} & \tiny{[0.519, 0.859]} & \tiny{[0.999, 1.000]} & \tiny{[0.959, 0.971]} & \tiny{[0.690, 0.767]} & \tiny{[0.999, 1.000]} & \tiny{[0.272, 0.339]} & \tiny{[0.641, 0.763]} & \tiny{[0.022, 0.033]} \\
 & 256 & 0.950 & 0.884 & \textbf{1.000} & 0.711 & 0.725 & \textbf{0.993} & 0.697 & 0.730 & \textbf{0.123} \\
 &  & \tiny{[0.788, 0.964]} & \tiny{[0.686, 0.921]} & \tiny{[0.999, 1.000]} & \tiny{[0.587, 0.813]} & \tiny{[0.698, 0.759]} & \tiny{[0.991, 0.995]} & \tiny{[0.638, 0.848]} & \tiny{[0.640, 0.771]} & \tiny{[0.095, 0.182]} \\
 & 1024 & 0.951 & 0.905 & \textbf{0.999} & 0.214 & 0.696 & \textbf{0.910} & 1.226 & 0.791 & \textbf{0.422} \\
 &  & \tiny{[0.874, 0.971]} & \tiny{[0.795, 0.945]} & \tiny{[0.996, 0.999]} & \tiny{[0.139, 0.415]} & \tiny{[0.659, 0.746]} & \tiny{[0.893, 0.930]} & \tiny{[0.927, 1.464]} & \tiny{[0.669, 0.882]} & \tiny{[0.385, 0.575]} \\
\cmidrule(l){1-11}
\multirow{6}{*}{Darcy 2D} & 64 & 0.696 & -0.902 & \textbf{0.744} & 0.975 & 0.636 & \textbf{0.979} & 0.310 & 0.803 & \textbf{0.219} \\
 &  & \tiny{[0.645, 0.773]} & \tiny{[-1.413, -0.110]} & \tiny{[0.674, 0.793]} & \tiny{[0.972, 0.978]} & \tiny{[0.594, 0.683]} & \tiny{[0.974, 0.985]} & \tiny{[0.295, 0.333]} & \tiny{[0.763, 0.875]} & \tiny{[0.176, 0.248]} \\
 & 256 & 0.695 & -4.916 & \textbf{0.764} & 0.884 & 0.704 & \textbf{0.948} & 0.545 & 0.791 & \textbf{0.370} \\
 &  & \tiny{[0.445, 0.754]} & \tiny{[-7.546, -2.580]} & \tiny{[0.524, 0.867]} & \tiny{[0.851, 0.909]} & \tiny{[0.635, 0.744]} & \tiny{[0.924, 0.957]} & \tiny{[0.485, 0.625]} & \tiny{[0.755, 0.847]} & \tiny{[0.325, 0.430]} \\
 & 1024 & 0.661 & -7.883 & \textbf{0.683} & \textbf{0.826} & 0.808 & 0.694 & \textbf{0.584} & 0.752 & 0.766 \\
 &  & \tiny{[0.274, 0.790]} & \tiny{[-15.184, -4.135]} & \tiny{[0.290, 0.812]} & \tiny{[0.769, 0.862]} & \tiny{[0.771, 0.841]} & \tiny{[0.664, 0.727]} & \tiny{[0.507, 0.648]} & \tiny{[0.662, 0.805]} & \tiny{[0.727, 0.817]} \\
\cmidrule(l){1-11}
\multirow{6}{*}{\shortstack{Reac.--Diff. 2D}} & 64 & 0.259 & -1.337 & \textbf{0.294} & 0.947 & 0.497 & \textbf{0.979} & 0.357 & 1.205 & \textbf{0.209} \\
 &  & \tiny{[0.170, 0.302]} & \tiny{[-3.167, -0.962]} & \tiny{[0.233, 0.347]} & \tiny{[0.938, 0.954]} & \tiny{[0.425, 0.676]} & \tiny{[0.968, 0.985]} & \tiny{[0.341, 0.390]} & \tiny{[1.091, 2.257]} & \tiny{[0.196, 0.283]} \\
 & 256 & 0.119 & -1.371 & \textbf{0.229} & 0.798 & 0.385 & \textbf{0.919} & 0.659 & 1.346 & \textbf{0.442} \\
 &  & \tiny{[-0.024, 0.180]} & \tiny{[-2.560, -0.748]} & \tiny{[0.121, 0.370]} & \tiny{[0.750, 0.845]} & \tiny{[0.279, 0.511]} & \tiny{[0.897, 0.927]} & \tiny{[0.594, 0.692]} & \tiny{[1.229, 1.547]} & \tiny{[0.405, 0.501]} \\
 & 1024 & -0.148 & -2.495 & \textbf{-0.097} & \textbf{0.834} & 0.577 & 0.790 & \textbf{0.580} & 1.344 & 0.707 \\
 &  & \tiny{[-0.584, 0.108]} & \tiny{[-5.778, -1.093]} & \tiny{[-0.521, 0.162]} & \tiny{[0.796, 0.863]} & \tiny{[0.500, 0.642]} & \tiny{[0.716, 0.809]} & \tiny{[0.542, 0.635]} & \tiny{[1.171, 1.732]} & \tiny{[0.626, 0.756]} \\
\cmidrule(l){1-11}
\multirow{6}{*}{Shallow Water 2D} & 64 & 0.522 & 0.089 & \textbf{0.597} & 0.958 & 0.331 & \textbf{0.993} & 0.330 & 0.991 & \textbf{0.127} \\
 &  & \tiny{[0.491, 0.606]} & \tiny{[0.025, 0.139]} & \tiny{[0.564, 0.650]} & \tiny{[0.948, 0.964]} & \tiny{[0.261, 0.431]} & \tiny{[0.989, 0.995]} & \tiny{[0.312, 0.378]} & \tiny{[0.904, 1.042]} & \tiny{[0.116, 0.157]} \\
 & 256 & 0.736 & 0.175 & \textbf{0.799} & 0.902 & 0.592 & \textbf{0.963} & 0.444 & 0.843 & \textbf{0.309} \\
 &  & \tiny{[0.699, 0.781]} & \tiny{[-0.019, 0.244]} & \tiny{[0.757, 0.838]} & \tiny{[0.894, 0.921]} & \tiny{[0.489, 0.717]} & \tiny{[0.949, 0.972]} & \tiny{[0.407, 0.460]} & \tiny{[0.695, 0.946]} & \tiny{[0.266, 0.356]} \\
 & 1024 & \textbf{0.885} & 0.275 & 0.867 & \textbf{0.914} & 0.799 & 0.692 & \textbf{0.407} & 0.641 & 0.728 \\
 &  & \tiny{[0.842, 0.901]} & \tiny{[0.072, 0.472]} & \tiny{[0.821, 0.899]} & \tiny{[0.902, 0.927]} & \tiny{[0.722, 0.830]} & \tiny{[0.676, 0.726]} & \tiny{[0.373, 0.435]} & \tiny{[0.567, 0.764]} & \tiny{[0.708, 0.758]} \\
\cmidrule(l){1-11}
\multirow{2}{*}{MGN Cylinder} & 64 & -0.409 & \textbf{0.702} & 0.134 & \textbf{0.995} & 0.258 & 0.915 & \textbf{0.104} & 1.225 & 0.451 \\
 &  & \tiny{[-3.934, 0.136]} & \tiny{[0.630, 0.856]} & \tiny{[-0.775, 0.408]} & \tiny{[0.990, 0.996]} & \tiny{[0.171, 0.363]} & \tiny{[0.850, 0.952]} & \tiny{[0.083, 0.129]} & \tiny{[1.024, 1.322]} & \tiny{[0.370, 0.587]} \\
\cmidrule(l){1-11}
\multirow{2}{*}{MGN Airfoil} & 64 & -1.182 & \textbf{0.945} & 0.584 & \textbf{0.983} & -0.025 & 0.779 & \textbf{0.172} & 3.254 & 0.715 \\
 &  & \tiny{[-2.687, 0.020]} & \tiny{[0.915, 0.968]}
 & \tiny{[0.275, 0.651]} & \tiny{[0.971, 0.991]} & \tiny{[-0.084, 0.017]}
 & \tiny{[0.741, 0.807]} & \tiny{[0.128, 0.246]} & \tiny{[1.580, 6.391]} & \tiny{[0.643, 0.771]} \\
\bottomrule
\end{tabular}